\newtheorem{theorem}{Theorem}
\title{Turbocharging Solution Concepts: Solving NEs, CEs and CCEs with Neural Equilibrium Solvers}
\author{
  Luke Marris \\
  DeepMind (\texttt{marris@deepmind.com}), UCL
  \AND
  Ian Gemp \\
  DeepMind
  \And
  Thomas Anthony \\
  DeepMind
  \And
  Andrea Tacchetti \\
  DeepMind
  \And
  Siqi Liu \\
  DeepMind, UCL
  \And
  Karl Tuyls \\
  DeepMind
}
\begin{document}

\maketitle

\begin{abstract}
    Solution concepts such as Nash Equilibria, Correlated Equilibria, and Coarse Correlated Equilibria are useful components for many multiagent machine learning algorithms. Unfortunately, solving a normal-form game could take prohibitive or non-deterministic time to converge, and could fail. We introduce the Neural Equilibrium Solver which utilizes a special equivariant neural network architecture to approximately solve the space of all games of fixed shape, buying speed and determinism. We define a flexible equilibrium selection framework, that is capable of uniquely selecting an equilibrium that minimizes relative entropy, or maximizes welfare. The network is trained without needing to generate any supervised training data. We show remarkable zero-shot generalization to larger games. We argue that such a network is a powerful component for many possible multiagent algorithms.
\end{abstract}

\section{Introduction}

Normal-form solution concepts such as Nash Equilibrium (NE) \citep{hu1998_nash_q,nash1951_neq}, Correlated Equilibrium (CE) \citep{aumann1974_ce}, and Coarse Correlated Equilibrium (CCE) \citep{moulin1978_cce} are useful components and subroutines for many multiagent machine learning algorithms. For example, value-based reinforcement learning algorithms for solving Markov games, such as Nash Q-learning \citep{wellman2003_nashqlearning} and Correlated Q-Learning \citep{greenwald2003_ce_q} maintain state action values for every player in the game. These action values are equivalent to per-state normal-form games, and policies are equilibrium solutions of these games. Critically, this policy will need to be recomputed each time the action-value is updated during training, and for large or continuous state-space Markov games, every time the agents need to take an action. Another class of multiagent algorithms are those in the space of Empirical Game Theoretic Analysis (EGTA) \citep{walsh2002_egta,wellman2006_egta} including PSRO \citep{lanctot2017_psro,mcmahan2003_double_oracle}, JPSRO \citep{marris2021_jpsro_arxiv}, and NeuPL \citep{liu2022_neupl,liu2022_simplex_neupl}. These algorithms are capable of training policies in extensive-form games, and require finding equilibria of empirically estimated normal-form games as a subroutine (the ``meta-solver'' step). In particular, these algorithms have been critical in driving agents to superhuman performance in Go \citep{silver2016_mastering}, Chess \citep{silver2018_alphazero}, and StarCraft \citep{vinyals2019_starcraft}.

Unfortunately, solving for an equilibrium can be computationally complex. NEs are known to be PPAD \citep{daskalakis2009_ne_complexity,chen2009_nash_complexity}. (C)CEs are defined by linear constraints, and if a linear objective is used to select an equilibrium, can be solved by linear programs (LPs) in polynomial time. However, in general, the solutions to LPs are non-unique (e.g. zero-sum games), and therefore are unsuitable equilibrium selection methods for many algorithms, and unsuitable for training neural networks which benefit from unambiguous targets. Objectives such as Maximum Gini \citep{marris2021_jpsro_icml} (a quadratic program (QP)), and Maximum Entropy \citep{ortix2007_mece} (a nonlinear program), are unique but are more complex to solve.

As a result, solving for equilibria often requires deploying iterative solvers, which theoretically can scale to large normal-form games but may (i) take an unpredictable amount of time to converge, (ii) take a prohibitive amount of time to do so, and (iii) may fail unpredictably on ill-conditioned problems. Furthermore, classical methods~\citep{fargier2022_path,lemke1964_equilibrium,mckelvey2014_gambit} (i) do not scale, and (ii) are non-differentiable. This limits the applicability of equilibrium solution concepts in multiagent machine learning algorithms.

Therefore, there exists an important niche for approximately solving equilibria in medium sized normal-form games, quickly, in batches, reliably, and in a deterministic amount of time. With appropriate care, this goal can be accomplished with a Neural Network which amortizes up-front training cost to map normal-form payoffs to equilibrium solution concepts quickly at test time. We propose the Neural Equilibirum Solver (NES). This network is trained to optimize a composite objective function that weights accuracy of the returned equilibrium against auxiliary objectives that a user may desire such as maximum entropy, maximum welfare, or minimum distance to some target distribution. We introduce several innovations into the design and training of NES so that it is efficient and accurate. Unlike most supervised deep learning models, NES avoids the need to explicitly construct a labeled dataset of (game, equilibrium) pairs. Instead we derive a loss function that can be minimized in an unsupervised fashion from only game inputs. We also exploit the duality of the equilibrium problem. Instead of solving for equilibria in the primal space, we train NES to solve for them in the dual space, which has a much smaller representation. We utilize a training distribution that efficiently represents the space of all normal-form games of a desired shape and use an invariant preprocessing step to map games at test time to this space. In terms of the network architecture, we design a series of layers that are equivariant to symmetries in games such as permutations of players and strategies, which reduces the number of training steps and improves generalization performance. The network architecture is independent of the number of strategies in the game and we show interesting zero-shot generalization to larger games. This network can either be pretrained before being deployed, trained online alongside another machine learning algorithm, or a mixture of both.

\section{Preliminaries}

\paragraph{Game Theory}
Game theory is the study of the interactive behaviour of rational payoff maximizing agents in the presence of other agents. The environment that the agents operate in is called a game. We focus on a particular type of single-shot, simultaneous move game called a \emph{normal-form} game. A normal-form game consists of $N$ players, a set of strategies available to each player, $a_p \in \mathcal{A}_p$, and a payoff for each player under a particular joint action, $G_p(a)$, where $a = (a_1, ..., a_N) = (a_p, a_{-p}) \in \mathcal{A} = \otimes_{p} \mathcal{A}_p$. The subscript notation $-p$ is used to mean ``all players apart from player $p$''. Games are sometimes referred to by their shape, for example: $|A_1|\times...\times|A_N|$. The distribution of play is described by a joint $\sigma(a)$. The goal of each player is to maximize their expected payoff, $\sum_{a \in \mathcal{A}} \sigma(a) G_p(a)$. Players could play independently by selecting a strategy according to their marginal $\sigma(a_p)$ over joint strategies, such that $\sigma(a) = \otimes_p \sigma(a_p)$. However this is limiting because it does not allow players to coordinate. A mediator called a \emph{correlation device} could be employed to allow players to execute arbitrary joint strategies $\sigma(a)$ that do not necessarily factorize into their marginals. Such a mediator would sample from a publicly known joint $\sigma(a)$ and secretly communicate to each player their recommended strategy. Game theory is most developed in a subset of games: those with two players and a restriction on the payoffs, $G_1(a_1, a_2) = -G_2(a_1, a_2)$, known as zero-sum. Particularly in N-player, general-sum games, it is difficult to define a single criterion to find solutions to games. One approach is instead to consider joints that are in equilibrium: distributions such that no player has incentive to unilaterally deviate from a recommendation.

\paragraph{Equilibrium Solution Concepts}
Correlated Equilibria (CEs) \citep{aumann1974_ce} can be defined in terms of linear inequality constraints. The deviation gain of a player is the change in payoff the player achieves when deviating to action $a'_p$ from a recommended action $a''_p$, when the other players play $a_{-p}$.
\begin{align}
    A^\text{CE}_p(a'_p, a''_p, a) = A^\text{CE}_p(a'_p, a''_p, a_p, a_{-p}) &= \begin{cases}
    G_p(a'_p, a_{-p}) - G_p(a''_p, a_{-p}) & a_p = a''_p \\
    0 & \text{otherwise}
    \end{cases}
\end{align}
A distribution, $\sigma(a)$, is in $\epsilon$-CE if the deviation gain is no more than some constant $\epsilon_p \leq \epsilon$ for every pair of recommendation, $a''_p$, and deviation strategies, $a'_p$, for every player, $p$. These linear constraints geometrically form a convex polytope of feasible solutions.
\begin{align}
    \text{$\epsilon$-CE:}& & \smashoperator{\sum_{a \in \mathcal{A}}} \sigma(a) A^\text{CE}_p(a'_p, a''_p, a) &\leq \epsilon_p \qquad \forall p \in [1,N], a''_p \neq a'_p  \in \mathcal{A}_p  \label{eq:ce_con}
\end{align}

Coarse Correlated Equilibria (CCEs) \citep{moulin1978_cce} are similar to CEs but a player may only consider deviating before receiving a recommended strategy. Therefore the deviation gain for CCEs can be derived from the CE definition by summing over all possible recommended strategies $a''_p$.
\begin{align}
    A^\text{CCE}_p(a'_p, a) &= \smashoperator{\sum_{a''_p \in \mathcal{A}_p}} A^\text{CE}_p(a'_p, a''_p, a) = G_p(a'_p, a_{-p}) - G_p(a) 
\end{align}
A distribution, $\sigma(a)$, is in $\epsilon$-CCE if the deviation gain is no more than some constant $\epsilon_p \leq \epsilon$ for every deviation strategy, $a'_p$, and for every player, $p$.
\begin{align}
    \text{$\epsilon$-CCE:}& & \smashoperator{\sum_{a \in \mathcal{A}}} \sigma(a) A^\text{CCE}_p(a'_p, a) &\leq \epsilon_p  \qquad \forall p \in [1,N], a'_p \in \mathcal{A}_p \label{eq:cce_con}
\end{align}

NEs \citep{nash1951_neq} have similar definitions to CCEs but have an extra constraint: the joint distribution factorizes $\otimes_p \sigma(a_p) = \sigma(a)$, resulting in nonlinear constraints\footnote{This is why NEs are harder to compute than (C)CEs.}.
\begin{align}
    \text{$\epsilon$-NE:}& & \smashoperator{\sum_{a \in \mathcal{A}}} \otimes_{q} \sigma(a_q) A^\text{CCE}_p(a'_p, a) &\leq \epsilon_p \qquad \forall p \in [1,N], a'_p \in \mathcal{A}_p \label{eq:ne_con}
\end{align}

Note that the definition of the NE uses the same deviation gain as the CCE definition. Another remarkable fact is that the marginals of any joint CCE in two-player constant-sum games, $\sigma(a_p) = \sum_{a_{-p}} \sigma(a)$, is also an NE, when $\epsilon_p=0$. Therefore we can use CCE machinery to solve for NEs in such classes of games.


When a distribution is in equilibrium, no player has incentive to \emph{unilaterally} deviate from it to achieve a better payoff. There can however be many equilibria in a game, choosing amongst these is known as the \emph{equilibrium selection problem} \citep{harsanyi1988_eq_selection}. For (C)CEs, the valid solution space is convex (Figure~\ref{fig:normal_form_games}), so any strictly convex function will suffice (in particular, Maximum Entropy (ME) \citep{ortix2007_mece} and Maximum Gini (MG) \citep{marris2021_jpsro_arxiv} have been proposed). For NEs, the solution space is convex for only certain classes of games such as two-player constant-sum games. Indeed, NEs are considered fundamental in this class where they have powerful properties, such as being unexploitable, interchangeable, and tractable to compute. However, for N-player general-sum games (C)CEs may be more suitable as they remain tractable and permit coordination between players which results in higher-welfare equilibria. If $\epsilon_p \geq 0$, there must exist at least one NE, CE, and CCE for any finite game, because a NE always exists and $\text{NE} \subseteq \text{CE} \subseteq \text{CCE}$. Learning NEs \citep{mcmahan2003_double_oracle} and CCEs \citep{hart2000_regret_matching} on a single game is well studied.

\paragraph{Neural Network Solvers} Approximating NEs using neural networks is known to be agnostic PAC learnable \citep{duan2021_pac_learnability_of_ne}. There is also work learning (C)CEs \citep{bai2020_ce_cce_self_play,jin2021_vlearning} and training neural networks to approximate NEs \citep{duan2021_pac_learnability_of_ne,hartford2016_equivariant_architecture} on subclasses of games. Learned NE meta-solvers have been deployed in PSRO \cite{feng2021_neuralautocurricula}. Differentiable neural networks have been developed to learn QREs \citep{ling2018_qre_normal_form_network}. NEs for contextual games have been learned using fixed point (deep equilibrium) networks \citep{heaton2021_fixed_point_ne_network}. A related field, L2O \citep{chen2021_l2o}, aims to learn an iterative optimizer more suited to a particular distribution of inputs, while this work focuses on learning a direct mapping. To the best of our knowledge, no work exists training a general approximate mapping from the full space of games to (C)CEs with flexible selection criteria.


\section{Maximum Welfare Minimum Relative Entropy (C)CEs}
\label{sec:mwre}

Previous work has argued that having a unique objective to solve for equilibrium selection is important. The principle of maximum entropy \citep{jaynes1957_maxent} has been used to find unique equilibria \citep{ortix2007_mece}.
In maximum entropy selection, payoffs are ignored and selection is based on minimizing the distance to the uniform distribution. This has two interesting properties: (i) it makes defining unique solutions easy, (ii) the solution is invariant transformations (such as offset and positive scaling) of the payoff tensor. While these solutions are unique, they both result in weak and low payoff equilibria because they find solutions on the boundary of the polytope. Meanwhile, the literature tends to favour Maximum Welfare (MW) because it results in high value for the agents and is a linear objective, however in general it is not unique. We consider a composite objective function composed of (i) Minimum Relative Entropy (MRE, also known as Kullback-Leibler divergence) between a target joint, $\hat{\sigma}(a)$, and the equilibrium joint, $\sigma(a)$, (ii) distance between a target approximation, $\hat{\epsilon}_p$, and the equilibrium approximation, $\epsilon_p$, (iii) maximum of a linear objective, $W(a)$. The objective is constrained by the (i) distribution constraints ($\sum_a \sigma(a) = 1$ and $\sigma(a) \geq 0$) and, (ii) either CCE constraints (Equation~\eqref{eq:cce_con}) or CE constraints (Equation~\eqref{eq:ce_con}).
\begin{align}
    \arg\max_{\sigma, \epsilon_p} \mu \sum_{a \in \mathcal{A}} \sigma(a) W(a) - \sum_{a \in \mathcal{A}} \sigma(a) \ln  \left( \frac{\sigma(a)}{\hat{\sigma}(a)} \right) - \rho \sum_p \left(\epsilon^+_p - \epsilon_p \right) \ln \left( \frac{1}{\exp(1)}\frac{\epsilon^+_p - \epsilon_p}{(\epsilon^+_p- \hat{\epsilon}_p )} \right )
\end{align}

The approximation weight, $\rho$, and welfare weight, $\mu$, are hyperparameters that control the balance of the optimization. The maximum approximation parameter, $\epsilon^+_p$, is another constant that is usually chosen to be equal to the payoff scale (Section~\ref{sec:preprocessing}). The approximation term is designed to have a similar form to the relative entropy and is maximum when $\hat{\epsilon}_p = \epsilon_p$. We refer to this equilibrium selection framework as Target Approximate Maximum Welfare Minimum Relative Entropy ($\hat{\epsilon}$-MWMRE).

\subsection{Dual of $\epsilon$-MWMRE (C)CEs}
\label{sec:min_kl_dual}

Rather than performing a constrained optimization, it is easier to solve the dual problem, $\arg \min_{\alpha_p} L^\text{(C)CE}$ (derived in Section~\ref{sec:mwmre_derivation}), where $\alpha_p^\text{CE}(a'_p, a''_p) \geq 0$ are the dual deviation gains corresponding to the CE constraints, and $\alpha_p^\text{CCE}(a'_p) \geq 0$ are the dual deviation gains corresponding to the CCE constraints. Note that we do not need to optimize over the primal joint, $\sigma(a)$. Choosing one of the elements in the curly brackets, the Lagrangian is defined:
\begin{align}
    \overset{\!\!\text{(C)CE}\!\!}{L} &= \ln \left( \sum_{a \in \mathcal{A}} \hat{\sigma}(a) \exp \left( \overset{\!\!\text{(C)CE}\!\!}{l(a)} \right) \right) + \sum_p \epsilon^+_p \left\{ \smashoperator{\sum_{a'_p, a''_p}} \overset{\!\!\text{CE}\!\!}{\alpha_p}(a'_p,a''_p), \smashoperator{\sum_{a'_p}} \overset{\!\!\text{CCE}\!\!}{\alpha_p}(a'_p)  \right \} - \rho \sum_p \overset{\!\!\text{(C)CE}\!\!}{\epsilon_p} \label{eq:ce_and_cce_loss}
\end{align}
The logits $l^\text{(C)CE}(a)$ are defined:
\begin{align}
    \overset{\text{(C)CE}}{l(a)} &= \mu \sum_{a} W(a) - \sum_p \left \{ \smashoperator{\sum_{a'_p, a_p}} \overset{\text{CE}}{\alpha_p}(a'_p,a_p) \overset{\text{CE}}{A_p}(a'_p,a''_p,a), \smashoperator{\sum_{a'_p}} \overset{\text{CCE}}{\alpha_p}(a'_p) \overset{\text{CCE}}{A_p}(a'_p,a) \right \}
\end{align}
The primal joint and primal approximation parameters are defined:\\
\begin{minipage}[t]{.35\linewidth}
\begin{align}
    \overset{\!\!\text{(C)CE}\!\!}{\sigma(a)} &= \frac{\hat{\sigma}(a) \exp \Big( \overset{\text{(C)CE}}{l(a)} \Big)}{\sum\limits_{\!\! a \in \mathcal{A} \!\!} \hat{\sigma}(a) \exp \Big( \overset{\text{(C)CE}}{l(a)} \Big)} \label{eq:primal_joint}
\end{align}
\end{minipage} \hfill
\begin{minipage}[t]{.64\linewidth}
\begin{align}
    \overset{\!\!\text{(C)CE}\!\!}{\epsilon_p} &= (\hat{\epsilon}_p - \epsilon^+_p) \exp \left( - \frac{1}{\rho} \left \{ \smashoperator{\sum_{a'_p, a''_p}} \overset{\!\!\text{CE}\!\!}{\alpha_p}(a'_p,a''_p),  \smashoperator{\sum_{a'_p}} \overset{\!\!\text{CCE}\!\!}{\alpha_p}(a'_p) \right \} \right)  + \epsilon^+_p \label{eq:primal_approximation}
\end{align}
\end{minipage}

\begin{figure*}[t!]
    \centering
    \begin{subfigure}[t]{0.32\linewidth}
        \centering
        \includegraphics[width=1.0\textwidth]{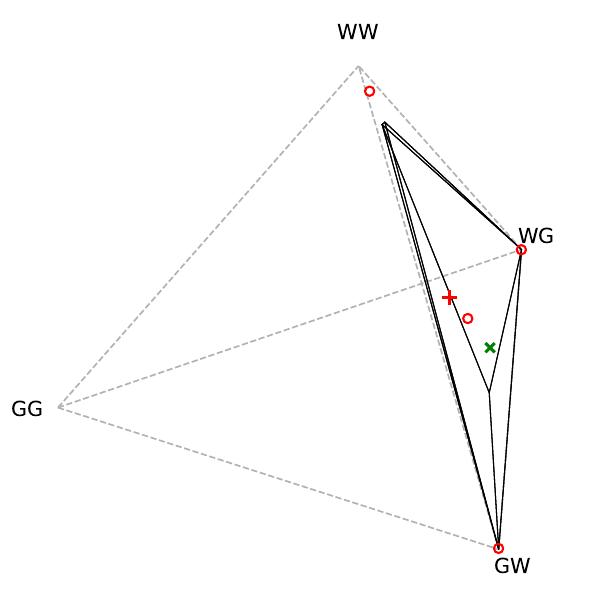}
        \caption{\centering Traffic Lights}
        \label{fig:traffic_lights}
    \end{subfigure}
    \begin{subfigure}[t]{0.32\linewidth}
        \centering
        \includegraphics[width=1.0\textwidth]{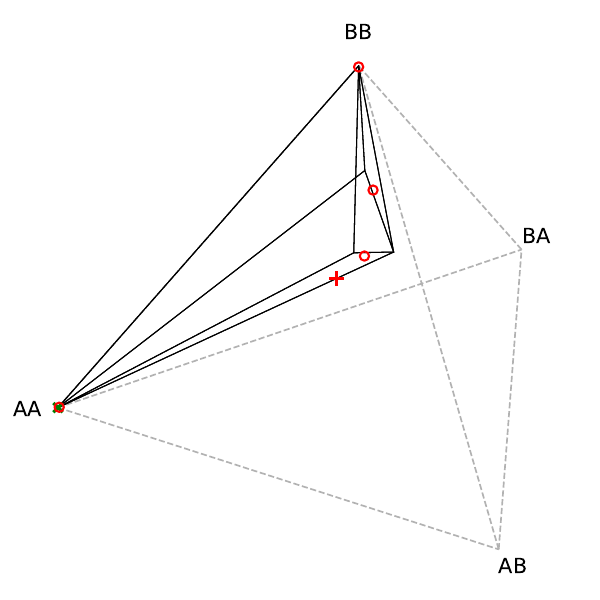}
        \caption{\centering Pure Coordination}
        \label{fig:pure_coordination}
    \end{subfigure}
    \begin{subfigure}[t]{0.32\linewidth}
        \centering
        \includegraphics[width=1.0\textwidth]{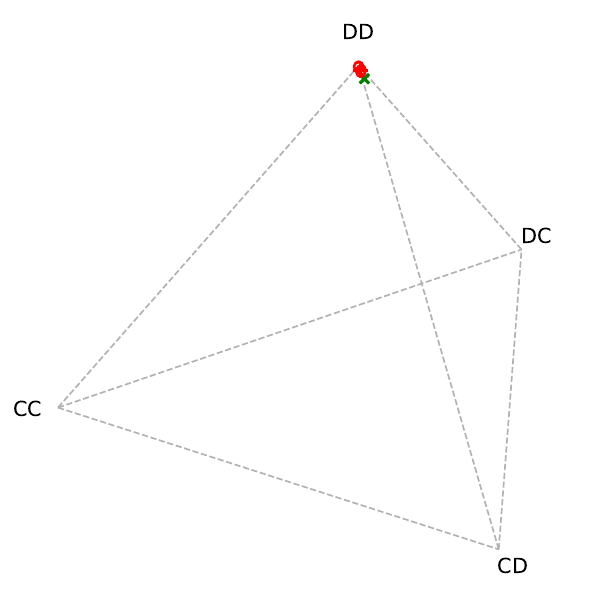}
        \caption{\centering Prisoner's Dilemma}
        \label{fig:prosoners_dilemma}
    \end{subfigure}
    \caption{Diagrams for three $2 \times 2$ normal-form games, showing their (C)CE solution polytope on the joint simplex (in two-strategy games CEs and CCEs are equivalent). An {\color{ForestGreen} MWME NES}, trained by sampling over the space of payoffs and welfare targets, is used to approximate the MW(C)CE solution ({\color{ForestGreen}$ \times $}). An {\color{Red} MRE NES}, trained by sampling over the space of payoffs and joint targets, is used to approximate the ME(C)CE ({\color{Red}$+$}), and all pure joint target MRE(C)CEs ({\color{Red}$\circ$}). The networks have never trained on these games.}
    \label{fig:normal_form_games}
\end{figure*}

\section{Neural Network Training}

The network maps the payoffs of a game, $G_p(a)$, and the targets ($\hat{\sigma}(a)$, $\hat{\epsilon}_p$, $W(a)$) to the dual deviation gains, $\alpha^\text{(C)CE}_p$, that define the equilibrium. The duals are a significantly more space efficient objective target ($\sum_p|\mathcal{A}_p|^2$ for CEs and $\sum_p|\mathcal{A}_p|$ for CCEs) than the full joint ($\prod_p|\mathcal{A}_p|$), particularly when scaling the number of strategies and players. The joint, $\sigma(a)$, and approximation, $\epsilon_p$, can be computed analytically from the dual deviation gains and the inputs using Equations~\eqref{eq:primal_joint} and \eqref{eq:primal_approximation}. The network is trained by minimizing the loss, $L^\text{(C)CE}$ (Equation~\eqref{eq:ce_and_cce_loss}). We call the resulting architecture a Neural Equilibrium Solver (NES).

\subsection{Input and Preprocessing}
\label{sec:preprocessing}

The MRE objective, and the (C)CEs constraints are invariant to payoff offset and scaling. Therefore we can assume that the payoffs have been standardized without loss of generality. Each player's payoff should have zero-mean. Furthermore, it should be scaled such that the $L_m = ||\cdot||_m$ norm of the payoff tensor equals $Z_m$, where $Z_m$ is a scale hyperparameter chosen such that the elements of the inputs have unit variance (a property that ensures neural networks train quickly with standard parameter initializations \cite{glorot2010_xaviertanhinit}). We will ensure both these properties by including a preprocessing layer in NES. The preprocessed inputs ($G_p(a)$, $\hat{\sigma}(a)$, $\hat{\epsilon}_p$, $W(a)$) are then broadcast and concatenated together so that they result in an input of shape $[C,N,|\mathcal{A}_1|,...,|\mathcal{A}_N|]$, where the channel dimension $C=4$, if all inputs are required.

\subsection{Training Distribution}

The literature favours sampling games from the uniform or normal distribution. This introduces two problems: (i) it biases the distribution of games solvable by the network, and (ii) unnecessarily requires the network to learn offset and scale invariance in the payoffs. Recall that the space of equilibria are invariant to offset and positive scale transformations to the payoff. Zero-mean and $L_2$ norm scaling geometrically results in the surface of an $(|\mathcal{A}|-1)$-ball centered on the origin (Section~\ref{sec:game_dists}). We propose using this invariant subspace for training. We choose the norm scaling constant, $Z_2 = \sqrt{|\mathcal{A}|}$, such that the elements of the payoffs maintain unit variance.

\subsection{Gradient Calculation}

The gradient update is found by taking the derivative of the loss (Equation~\eqref{eq:ce_and_cce_loss}) with respect to the dual variables, $\alpha$. Note that computing a gradient does not require knowing the optimal joint $\sigma^*(a)$, so the network can be trained in an unsupervised fashion, from randomly generated inputs, $G_p(a)$, $\hat{\sigma}(a)$, $\hat{\epsilon}_p$, and $W(a)$.
\begin{align}
    \frac{\partial L^\text{(C)CE}}{\partial \left \{ \alpha_p^\text{CE}(a'_p, a''_p), \alpha_p^\text{CCE}(a'_p) \right \}} &=  \overset{\text{(C)CE}}{\epsilon_p} - \sum_{a} \left \{ \overset{\!\!\text{CE}\!\!}{A_p}(a'_p, a''_p, a), \overset{\text{CCE}}{A_p}(a'_p, a)  \right \} \overset{\!\!\text{(C)CE}\!\!}{\sigma(a)}
\end{align}

The dual variables, $\{\alpha^\text{CE}_p(a'_p, a''_p),  \alpha^\text{CCE}_p(a'_p)\}$, are outputs of the neural network, with learned parameters $\theta$. Gradients for these parameters can be derived using the chain rule:
\begin{align*}
    \frac{\partial L^\text{(C)CE}}{\partial \theta}
    =
    \frac{\partial L^\text{(C)CE}}{\partial \left \{ \alpha_p^\text{CE}(a'_p, a''_p), \alpha_p^\text{CCE}(a'_p) \right \}}
    \frac{\partial \left \{ \alpha_p^\text{CE}(a'_p, a''_p), \alpha_p^\text{CCE}(a'_p) \right \}}{\partial \theta}
\end{align*}
Backprop efficiently calculates these gradients, and many powerful neural network optimizers \citep{tieleman2012_rmsprop_lecture,kingma2014_adam,duchi2011_adagrad} and ML frameworks \citep{tensorflow2015_whitepaper,jax2018_jax,pytorch2019_pytorch} can be leveraged to update the network parameters.

\subsection{Equivariant Architectures}

The ordering of strategies and players in a normal-form game is unimportant, therefore the output of the network should be equivariant under two types of permutation; (i) strategy permutation, and (ii) player permutation. Specifically, for some strategy permutation $\tau_p(1),...,\tau_p(|\mathcal{A}_p|)$ applied to each element of a player's inputs (payoffs, target joint, and welfare), the outputs must also have permuted dimensions: ($\alpha_p(a_p) = \alpha_p(\tau_p(a_p))$ and $\sigma^\tau(a_1,...,a_N) = \sigma(\tau_1(a_1),...,\tau_N(a_N))$). Likewise, for some player permutation $\tau(1),...,\tau(N)$, the outputs must be transposed: ($\alpha^\tau_p(a_p) = \alpha_{\tau(p)}(a_{\tau(p)})$ and $\sigma^\tau(a_1,...,a_N) = \sigma(a_{\tau(1)},...,a_{\tau(N)})$). The latter equivariance can only be exploited by a network if all players have the same number of strategies (``cubic games''). There are $|\mathcal{A}_p|!$ possible strategy permutations for each player and $N!$ player permutations, resulting in $N!\left(|\mathcal{A}_p|! \right)^N$ possible equivariant permutations of each sampled payoff. Note that this is much greater than the number of joint strategies in a game, $N!\left(|\mathcal{A}_p|! \right)^N \gg |\mathcal{A}_p|^N$, which is an encouraging observation when considering how this approach will scale to large games. Utilizing an equivariant \citep{shawtaylor1993_symmetries,wood1996_invariant_neural_networks} architecture is therefore crucial to scale to large games because each sample represents many possible inputs. Equivariant architectures have been used before for two-player games \citep{hartford2016_equivariant_architecture}.

\paragraph{Payoffs Transformations}
The main layers of the architecture consist of activations with shape $[C,N,|\mathcal{A}_1|,...,|\mathcal{A}_N|]$, which is the same shape as a payoff tensor (with a channel dimension). We refer to layers with this shape as ``payoff'' layers. We consider transformations of the form:
\begin{align}
    g_{l+1}(c_{l+1}, p, a_1,...,a_N) &= f \left(\sum_{c_l^i}^{IC_l}w(c_{l+1}, c_l^i) \text{Con}_i^I \left[ \phi_i\left( g_l(c_l,p,a_1,...,a_N)\right) \right] + b(c_{l+1}) \right)
\end{align}
where $f$ is any equivariant nonlinearity\footnote{Common nonlinearities such as element-wise (ReLU, tanh, sigmoid), and SoftMax are all equivariant.}, $w$ are learned network weights, $b$ are learned network biases, and $\phi_i$ is one of many possible equivariant pooling functions (Section~\ref{sec:equivariant_pooling_functions}) and $\text{Con}$ is the concatenate function along the channel dimension. For example, consider one such function, $\phi_i=\sum_{a_1}$, which is invariant across any permutation of $a_1$ (similar to sum-pooling in CNNs), and equivariant over permutations of $p,a_2,...,a_N$. In general we can use $\phi_{\subseteq \{ p, a_1, ... a_N\} } g(p,a_1,...,a_N)$ (mean-pooling and max-pooling are good choices). If all players have an equal number of strategies, for some functions, weights can be shared over all $p \in [1,N]$ because of symmetry \citep{shawtaylor1994_invariant_weight_sharing}. Note that the number of trainable parameters scales with the number of input and output channels, and not with the size of the game (Figure~\ref{fig:network}), therefore it is possible for the network to generalize to games with different numbers of strategies. The basic layer, $g_{l+1}$, therefore comprises of a linear transform of a concatenated, broadcasted set of pooling functions.

\paragraph{Payoffs to CCE Duals Transformations}
Payoffs can be transformed to CCE duals, $\alpha^\text{CCE}_p(c_{l+1}, a'_p)$, by using a combination of a subset of the equivariant functions $\phi_i$ discussed above that sum over at least $-p$. If the number of strategies are equal for each player, the transformation weights can be shared and the duals can be stacked into a single object for more efficient computation in later layers: $\alpha^\text{CCE}(c_{l+1}, p, a'_p) = \text{Stack}_p \left( \alpha^\text{CCE}_p(c_{l+1}, a'_p) \right)$.

\paragraph{Payoffs to CE Duals Transformations}
The transformation to produce the CE duals is more complex. CE duals, $\alpha^\text{CCE}_p(c_{l+1},a''_p,a'_p)$, need to be \emph{symmetrically equivariant}. This property can be obtained by (i) independently generating two CCE duals and, (ii) taking outer operations, $\boxdot$ (for example sum or product), over them.
\begin{align}
    \alpha^\text{CE}_p(c_{l+1},a''_p,a'_p) = \hat{f}\left( \alpha^\text{CCE}_p(c_{l+1},a''_p)  \boxdot  \alpha^\text{CCE}_p(c_{l+1},a'_p) \right)
\end{align}
Where $\hat{f}$ is any equivariant nonlinearity \emph{with zero diagonal}\footnote{Masking is sufficient: e.g. $\hat{f}(a'_p, a''_p) = (1-I(a'_p, a''_p))f(a'_p, a''_p)$, where $I$ is the identity matrix.}. We know that the diagonal is zero because it represents the dual of the deviation gain when deviating from a strategy to itself, which is zero, and therefore cannot be violated. This is a useful property which will be exploited in later dual layers. These can also be stacked if players have equal number of strategies: $\alpha^\text{CCE}(c_{l+1}, p, a'_p, a''_p) = \text{Stack}_p \left( \alpha^\text{CCE}_p(c_{l+1}, a'_p, a''_p) \right)$.

\paragraph{CCE Duals Transformations}
Because the payoff activations are high-dimensional, it is worthwhile to operate on them in dual space. When transforming CCE duals we consider a mapping:
\begin{align}
    \alpha^\text{CCE}_{l+1}(c_{l+1}, p, a'_p) &= f \left(\sum_{c_l^i}^{I C_l}w(c_{l+1}, c_l^i) \text{Con}_i^I \left[ \phi_i \left( \alpha^\text{CCE}_l(c_l,p,a'_p)\right) \right] + b(c_{l+1}) \right)
\end{align}
where $f$ is any equivariant nonlinearity, and $\phi_i$ is from a set (Section~\ref{sec:equivariant_pooling_functions_cce}) of only two possible equivariant transformation functions (and two more if the game is cubic). For the final layer, we use a SoftPlus nonlinearity to ensure the output is nonnegative and has gradient everywhere.

\paragraph{CE Duals Transformations}
When transforming CE duals we consider functions of the form:
\begin{align}
    \alpha^\text{CE}_{l+1}(c_{l+1}, p, a'_p, a''_p) &= \hat{f} \left(\sum_{c_l}^{I C_l} w(c_{l+1}, c_l^i) \text{Con}_i^I \left[ \phi_i \left( \alpha^\text{CE}_l(c_l,p,a'_p, a''_p) \right] + b(c_{l+l}) \right) \right)
\end{align}
For the CE case, the equivariant linear transformations are more complex: it is symmetric over the recommended and deviation strategies. Fortunately this is a well studied equivariance class \citep{thiede2020_equivariant}, which can be fully covered by combining seven transforms (Section~\ref{sec:equivariant_pooling_functions}) which comprise of different sums and transpositions of the input.

\paragraph{Activation Variance}
Because the equivariant network possibly involves summing over dimensions of the inputs, activations are no longer independent of one another, so extra care needs to be taken when initializing the network to avoid variance explosion. To combat this we use three techniques: (i) inputs are scaled to unit variance as described previously (Section~\ref{sec:preprocessing}), (ii) the network is randomly initialized with variance scaling to ensure the variance at every layer is one, and (iii) we use BatchNorm \cite{ioffe2015_batchnorm} between every layer. We also used weight decay to regularize the network.

\paragraph{Advanced Architectures}
More advanced architectures such as ResNet \citep{he2015_resnet_arvix} or Transformers \citep{vaswani2017_attention_is_all_you_need_transformers} are possible. An example of the final architecture is summarized in Figure~\ref{fig:network}.

\begin{figure*}[t!]
    \centering
    \begin{tikzpicture}
    \node[anchor=south west,inner sep=0] (image) at (0,0) {\includegraphics[width=0.9\textwidth]{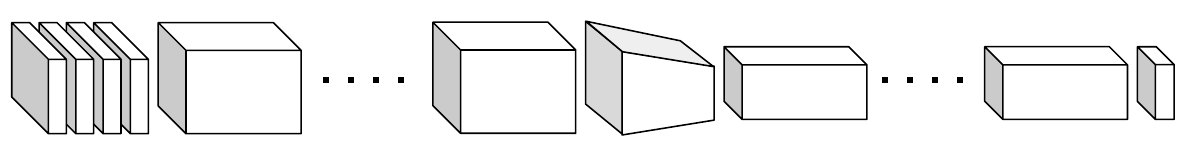}};
    \begin{scope}[x={(image.south east)},y={(image.north west)}]
        \node [text width=3cm,align=center] (input) at (0.06,1.3) {\scriptsize Preprocessed Input\\$G_p(a)$, $\hat{\epsilon}_p$, $\hat{\sigma}(a)$, $W(a)$\\$[B,4,N,|\mathcal{A}_1|,...,|\mathcal{A}_N|]$};
        \node [text width=4cm,align=center] (payoffs) at (0.3,1.3) {\scriptsize Payoffs To Payoffs Layers\\$g_l(b,c,p,a_1,...,a_n)$\\$[B,C,N,|\mathcal{A}_1|,...,|\mathcal{A}_N|]$};
        \node [text width=3cm,align=center] (payoffstoduals) at (0.54,1.3) {\scriptsize  Payoffs To Duals\\$~$\\$~$};
        \node [text width=6cm,align=center] (duals) at (0.78,1.3) {\scriptsize Duals To Duals Layers\\$\alpha_l(b,c,p,a'_p)$ or $\alpha_l(b,c,p,a'_p,a''_p)$\\$[B,C,N,|\mathcal{A}_p|]$ or $[B,C,N,|\mathcal{A}_p|,|\mathcal{A}_p|]$};
    \end{scope}
    \end{tikzpicture}%
    \caption{Network architecture showing the name, indices and shape (\textbf{B}atch, \textbf{C}hannels, \textbf{N}umber of players, \textbf{A}ctions per player) of each layer layer. Other architectures are possible, for example some of the inputs (target approximation, target joint, or welfare) could be passed in at a later layer.}
    \label{fig:network}
\end{figure*}

\subsection{Parameterizations}

The composite objective framework allows us to define a number of combinations of auxiliary objectives. We highlight several interesting specifications (Appendix Table~\ref{tab:parameterizations}). The most basic is Maximum Entropy (ME) which simply finds the unique equilibrium closest to the uniform distribution according to the relative entropy distance. This distribution need not be uniform, it could be any target distribution. We could instead favour a welfare objective parameterized on the payoffs to find a Maximum Welfare (MW) solution. The two previous solutions can be generalized to solve for any welfare (Maximum Welfare Maximum Entropy (MWME)) or any target (Minimum Relative Entropy (MRE)). Furthermore we need not limit ourselves to approximation parameters equal to zero, for example by finding the minimum possible approximation parameter we have the Maximum Strength (MS) solution. Finally, we can find solutions for any approximation parameter for the solutions discussed so far ($\hat{\epsilon}$-MWME and $\hat{\epsilon}$-MRE).

\section{Performance Experiments}

Traditionally performance of NE, and (C)CE solvers has focused on evaluating time to converge to a solution within some tolerance. Feedforward neural networks can produce \emph{batches} of solutions quickly\footnote{We found inference, $\frac{\text{step time}}{\text{batch size}}$, to be around $1\mu\text{s}$ on our hardware.} and deterministically. For non-trivial games this is much faster than what an iterative solver could hope to achieve. We therefore focus our evaluation on the trade-offs of the neural network solver, namely (i) how long it takes to train, and (ii) how accurate the solutions are. For the latter we use two metrics:
\begin{align*}
    \text{Solver Gap:} ~ \frac{1}{2} \sum_{a} \left | \sigma^*(a) - \sigma(a) \right | \quad \quad
    \text{(C)CE Gap:} ~ \sum_p \left[ \max_{.} \sum_a \left(A_p(., a) - \epsilon_p \right) \right]^+ \sigma(a) 
\end{align*}
The first (Solver Gap) measures the distance to the exact unique solution found by an iterative solver\footnote{We use an implementation in CVXPY \citep{diamond2016_cvxpy,agrawal2018_cvxpy} which leverages the ECOS \citep{domahidi2013_ecos} solver.}, $\sigma^*(a)$, and is bounded between $0$ and $1$, and is zero for perfect prediction. The second ((C)CE Gap) measures the distance to the equilibrium solution polytope, and is zero if it is within the polytope.

\paragraph{Parameterization Sweeps}
We show performance across a number of parameterizations, including (i) different equilibrium selection criteria (Figure~\ref{fig:sweep_equilibrium}), (ii) different shapes of games (Figure~\ref{fig:sweep_shape}), and (iii) different solution concepts (Figure~\ref{fig:sweep_concept}).

\paragraph{Classes of Games}
It is known that some distributions of game payoffs are harder for some methods to solve than others \citep{porter2008_simple}. We compare performance across a number of classes of transfer games (Appendix Table~\ref{tab:game_classes}) for a single MECCE and a single MECE \citep{ortix2007_mece} Neural Equilibrium Solver trained on $8 \times 8$ games. Figure~\ref{fig:sweep_over_games} shows the worst, mean, and best performance over 512 samples from each class in terms of (i) distance to any equilibrium, and (ii) distance to the target equilibrium found by an iterative solver. We also plot the performance of a uniform joint as a naive baseline, as the gap can be artificially reduced by scaling the payoffs. In regards to equilibrium violation, ME is tricky because it lies on the boundary of the polytope, so some violation is expected in an approximate setting. The plots showing the failure rate and run time of the iterative solver are to intuit difficult classes. The baseline iterative solver take about $0.05$s to solve a single game, the network can solve a batch of 4096 games in $0.0025$s. We see that for most classes the NES is very accurate with a solver gap of around $10^{-2}$. Some classes of games are indeed more difficult and these align with games that iterative equilibrium solvers struggle with. This hints that difficult games are ill-conditioned.

\begin{figure*}[t]
    \centering
    \begin{subfigure}[t]{0.24\linewidth}
        \includegraphics[width=1.0\linewidth]{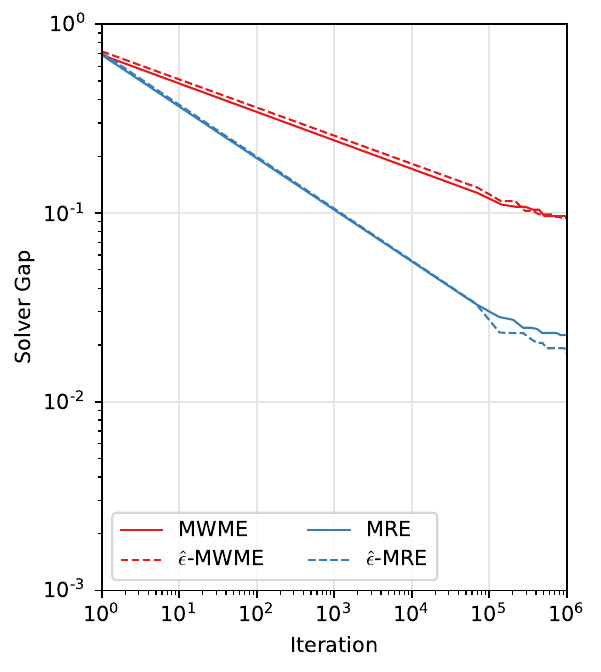}
        \caption{Equilibrium Selection}
        \label{fig:sweep_equilibrium}
    \end{subfigure}\hfill
    \begin{subfigure}[t]{0.24\linewidth}
        \includegraphics[width=1.0\linewidth]{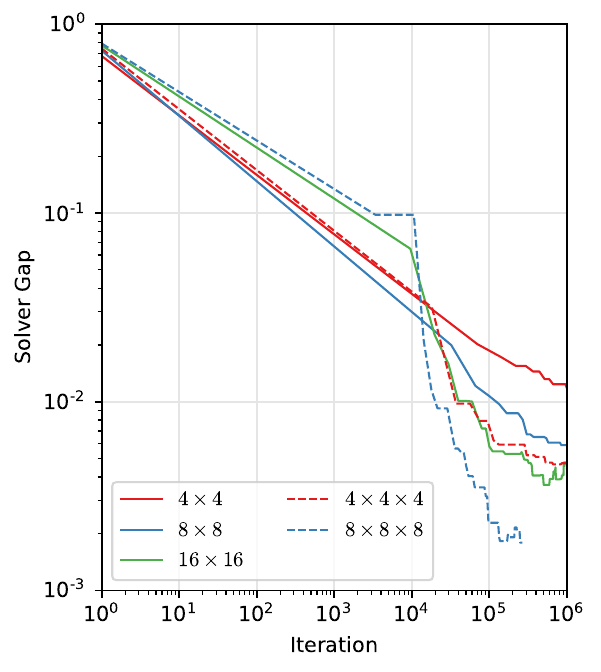}
        \caption{Shapes}
        \label{fig:sweep_shape}
    \end{subfigure}\hfill
    \begin{subfigure}[t]{0.24\linewidth}
        \includegraphics[width=1.0\linewidth]{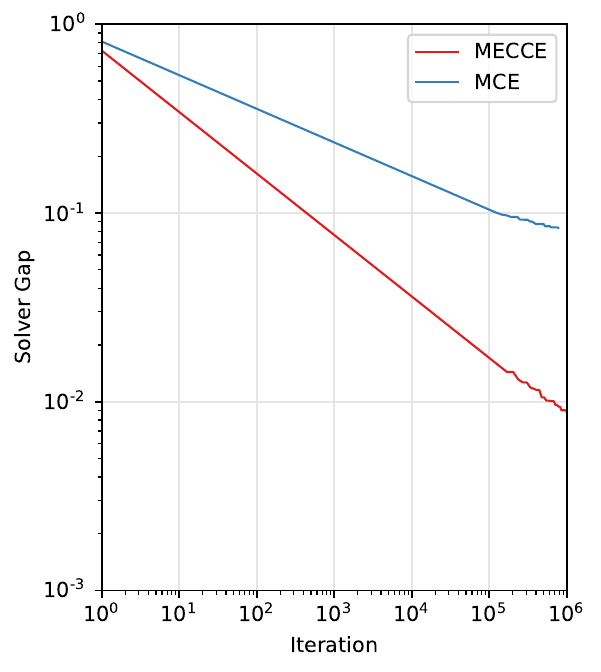}
        \caption{Solution Concepts}
        \label{fig:sweep_concept}
    \end{subfigure}\hfill
    \begin{subfigure}[t]{0.24\linewidth}
        \includegraphics[width=1.0\linewidth]{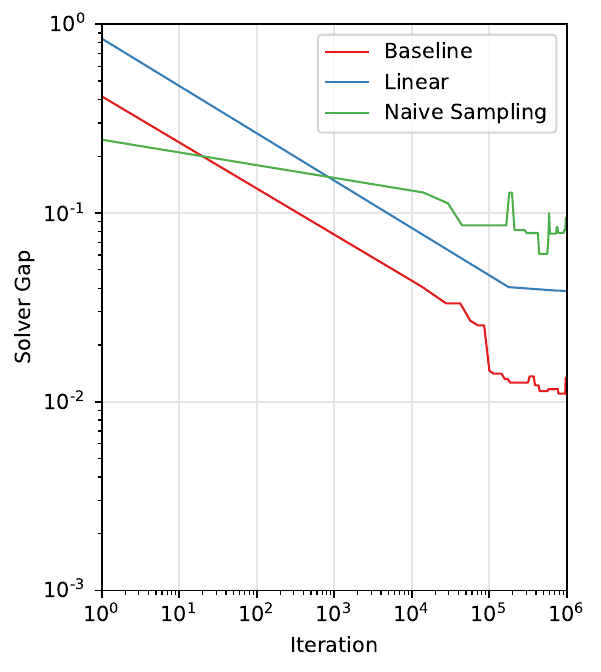}
        \caption{Ablation}
        \label{fig:sweep_ablation}
    \end{subfigure}
    \caption{Sweeps and ablation studies showing the average solver gap of three experiment seeds evaluated over 512 sampled games against the number of train steps. Subfigure (a) shows $4 \times 4$ games over different equilibrium selection, (b) shows MECCE over games with different numbers of players and strategies, (c) shows CE and CCE concepts on $8  \times 8$ games, and (d) shows ablation experiments on MECCE $4 \times 4 \times 4$ games.}
    \label{fig:sweeps}
\end{figure*}

\begin{figure*}[t]
    \centering
    \includegraphics[width=1.0\linewidth]{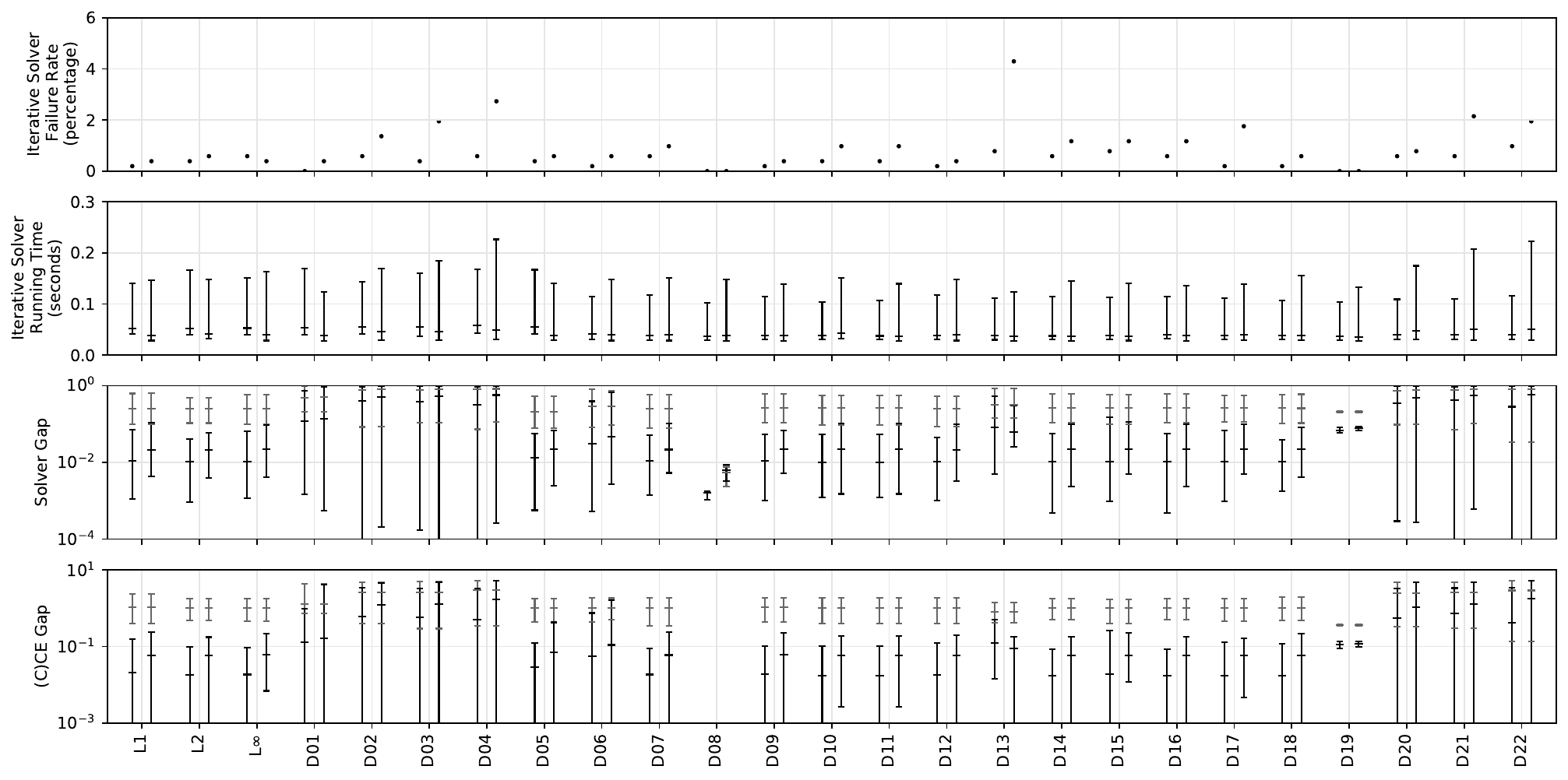}
    \caption{Worst, mean, and best performance of MECCE (left in pair) and MECE (right in pair) over 512 samples on the three classes introduced in this paper (Section~\ref{sec:preprocessing}), and on a subset \citep{porter2008_ne_and_gamut_subset} of transfer GAMUT \citep{nudelman2004_gamut} games (Appendix Table~\ref{tab:game_classes}). The network was only trained on the ``$L_2$ invariant subspace'' distribution of games. The gray range indicates the performance under a uniform distribution baseline.}
    \label{fig:sweep_over_games}
\end{figure*}

\paragraph{Ablations}
We show the performance (Figure~\ref{fig:sweep_ablation}) of the proposed method compared with (i) a linear network, and (ii) no invariant pre-processing with naive payoff sampling (each element sampled using a uniform distribution). Both result in significant reduction in performance.

\paragraph{Scaling} Due to the size of the representation of the payoffs, $G_p(a)$, the inputs and therefore the activations of the network grow significantly with the number of joint strategies in the game. Therefore without further work on sparser payoff representation, NES is limited by size of payoff inputs. For further discussion see Section~\ref{sec:discussion}. Nevertheless, Table~\ref{tab:scaling_experiments} shows good performance when scaling to moderately sized games. Note that the ``solver gap'' metric is incomplete on larger games because the ECOS evaluation solver fails to converge.

\paragraph{Generalization} An interesting property of the NES architecture is that its parameters do not depend on the number of strategies in the game. Therefore we can test the generalization ability of the network zero-shot on games with different numbers of strategies (Table~\ref{tab:generalization_experiments}). There are two observations: (i) NES only weakly generalizes to other game sizes under the solver gap metric, and (ii) NES strongly generalizes to larger games under the CCE gap, remarkably achieving zero violation. Therefore the network retains the ability to reliably find CCEs in larger games, but does struggle to accurately select the target MWMRE equilibrium. This could be mitigated by training the network on a mixture of game sizes, which we leave to future work.

\begin{table}[t]
\centering
\caption{Scaling experiments showing the gaps of five NES models for larger games, with a uniform baseline, over 128 samples. The ECOS solver used to evaluate ``solver gap'' fails on large games.}
\label{tab:scaling_experiments}
    \footnotesize
    \addtolength{\tabcolsep}{-3pt} 
    \begin{tabular}{c|cc|ccr}
    Game & \thead{CCE Gap\\under uniform} & \thead{CCE Gap\\under NES} & \thead{Solver Gap\\under uniform} & \thead{Solver Gap\\under NES} & \thead{Success\\Fraction} \\\hline
    $4 \times 4$ & $1.1006$ & $0.0274$  & $0.3552$ & $0.0120$ & $100\%$ \\
    $8 \times 8$ &  $1.0043$ & $0.0163$ & $0.2513$ & $0.0054$ & $99\%$  \\
    $16 \times 16$ & $0.8861$ & $0.0173$ & $0.2014$ & $0.0034$ & $98\%$  \\
    $32 \times 32$ & $0.7376$ & $0.0215$ & $-$ & $-$ & $0\%$  \\
    $64 \times 64$ & $0.5864$ & $0.0288$ & $-$ & $-$ & $0\%$  \\
    \end{tabular}
\end{table}

\begin{table}[t]
\centering
\caption{Generalization experiments showing how an $8 \times 8$ network generalizes to games with a different number of strategies, over 128 samples.}
\label{tab:generalization_experiments}
    \footnotesize
    \addtolength{\tabcolsep}{-3pt} 
    \begin{tabular}{c|cc|ccr}
    Game & \thead{CCE Gap\\under uniform} & \thead{CCE Gap\\under NES} & \thead{Solver Gap\\under uniform} & \thead{Solver Gap\\under NES} & \thead{Success\\Fraction} \\\hline
    $4 \times 4$ & $1.1006$ & $4.4445$  & $0.3552$ & $0.1500$ & $100\%$ \\
    $\mathbf{8 \times 8}$ &  $1.0043$ & $0.0163$ & $0.2513$ & $0.0054$ & $99\%$ \\
    $16 \times 16$ & $0.8861$ & $0.0000$ & $0.2014$ & $0.1089$ & $98\%$ \\
    $32 \times 32$ & $0.7376$ & $0.0000$ & $-$ & $-$ & $0\%$ \\
    $64 \times 64$ & $0.5864$ & $0.0000$ & $-$ & $-$ & $0\%$ \\
    \end{tabular}
\end{table}

\section{Applications}

With Neural Equilibrium Solvers (NES) it is possible to quickly find approximate equilibrium solutions using a variety of selection criteria. This allows the application of solution concepts into areas that would otherwise be too time-expensive and are not as sensitive to approximations.

\paragraph{Inner Loop of MARL Algorithms} For algorithms \cite{hu1998_nash_q,wellman2003_nashqlearning,greenwald2003_ce_q,lanctot2017_psro,marris2021_jpsro_icml,liu2022_neupl} where speed is critical, and the size of games is modest, but numerous, and approximations can be tolerated.

\paragraph{Warm-Start Iterative Solvers} 
Many iterative solvers start with a guess of the parameters and refine them over time to find an accurate solution \citep{duan2021_pac_learnability_of_ne}. It is possible to use NES to warm-start iterative solver algorithms, potentially significantly improving convergence.

\paragraph{Polytope Approximation} 
The framework can be used to approximate the full space of solutions by finding extreme points of the convex polytope. Because of convexity, any convex mixture of these extreme points is also a valid solution. Two approaches could be used to find extreme points (i) using different welfare objectives (ii) or using different target joint objectives. For example, using pure joint targets:
\begin{align*}
    W(a) = \begin{cases}
        1,      & \text{if } a = \hat{a} \\
        0,    & \text{otherwise}
    \end{cases} \quad
    \hat{\sigma}(a) = \begin{cases}
        1^-,      & \text{if } a = \hat{a} \\
        0^+,    & \text{otherwise}
    \end{cases}
\end{align*}
These could be computed in a single batch, and would cover a reasonably large subset of full polytope of solutions (the latter approach is demonstrated in Figure~\ref{fig:normal_form_games}). It would be easy to develop an algorithm that refines the targets at each step to gradually find all vertices of the polytope, if desired.



\paragraph{Differentiable Model and Mechanism Design}
Mechanism design (MD) is a sub-field of economics often described as ``inverse game theory'', where instead of studying the behavior of rational payoff maximizing agents on a given game, we are tasked with designing a game so that rational payoff maximizing participants will exhibit behaviours \emph{at equilibrium} that we deem desirable. The field has a long history to which it is near impossible to do justice; see~\citep{maskin2008_mechanism_design} for a review. The work presented here could impact MD in two ways. First, by making it easy to compute equilibrium strategies, NES could widen the class of acceptable output games, relaxing the restrictive requirements (e.g. strategic dominance) often imposed of the output games out of concern more permissive solution concepts could be hard for participants to compute. Second, NES maps payoffs to joint strategies and it is differentiable, one could imagine turning a mechanism design task to a optimization problem that could be solved using standard gradient descent (e.g. design a general-sum game where strategies at equilibrium maximize some non-linear function of welfare and entropy, with payoff lying in a useful convex and closed subset). A related idea is to find a game that produces a certain equilibrium \citep{ling2018_qre_normal_form_network}. Given an equilibrium, a payoff could be trained through the differentiable model that results in the desired specific behaviour.

\section{Discussion}
\label{sec:discussion}

The main limitation of the approach is that the activation space of the network is large, particularly with a large number of players and strategies which limits the size of games that can be tackled. Future work could look at restricted classes of games, such as polymatrix games \citep{deligkas2016_empirical_polymatrix,deligkas2017_polymatrix_nash_equilibria}, or graphical games \citep{jiang2011_actiongraphgames}, which consider only local payoff structure and have much smaller payoff representations. This is a promising direction because NES otherwise has good scaling properties: (i) the dual variables are space efficient, (ii) there are relatively few parameters, (iii) the number of parameters is independent of the number of strategies in the game, (iv) equivariance means each training sample is equivalent to training under all payoff permutations, and (v) there are promising zero-shot generalization results to larger games.

Solving for equilibria has the potential to promote increased cooperation in general-sum games, which could increase the welfare of all players. However, if a powerful and unethical actor had influence on the game being played, welfare gains of some equilibria could unfairly come at the expense of other players.

\newpage
\bibliography{bibtex,colab_bibtex}


\clearpage
\appendix

\section{Approximate Target Maximum Welfare Minimum Relative Entropy Equilbiria}
\label{sec:mwmre_derivation}

We use a Minimum Relative Entropy (RME) (also known as minimum KL divergence) $\sum_a \sigma(a) \ln \left( \frac{\sigma(a)}{\hat{\sigma}(a)} \right)$, where $\hat{\sigma}(a) > 0$ is a full-support joint such that, $\sum_a \hat{\sigma}(a) = 1$. This objective is similar to Maximum Entropy Correlated Equilibrium (MECE) \cite{ortix2007_mece}, and the proofs here are similar to the framework set out there. A drawback of MECE is that it is not easy to determine the minimum $\epsilon_p$ permissible. If we choose $\epsilon_p$ that does not permit a valid solution, then the parameters will diverge. We can circumvent this problem by optimizing the distance to a target $\hat{\epsilon}_p$. We engineer this target, $\min_{\epsilon_p} \rho \sum_p \left(\epsilon^+_p - \epsilon_p \right) \ln \left( \frac{1}{\exp(1)}\frac{\epsilon^+_p- \epsilon_p}{(\epsilon^+_p- \hat{\epsilon}_p )} \right )$, to have a global minimum at $\epsilon_p = \hat{\epsilon_p}$, where $0 < \rho < \infty$ is a hyper-parameter used to control the balance between the distance to the target distribution and the distance to the target approximation parameter. And $\mu$ is for balancing the linear objective.

\subsection{CEs}

\begin{theorem}[$\epsilon$-MWMRE CE] \label{the:mwmrece}
The $\hat{\epsilon}$-MWMRE CE solution is equivalent to minimizing the loss:
\begin{align*}
    \overset{\!\!\text{CE}\!\!}{L} &= \ln \left( \sum_{a \in \mathcal{A}} \hat{\sigma}(a) \exp \left( \overset{\!\!\text{CE}\!\!}{l(a)} \right) \right) + \sum_p \epsilon^+_p \smashoperator{\sum_{a'_p, a''_p}} \overset{\!\!\text{CE}\!\!}{\alpha_p}(a'_p,a''_p) - \rho \sum_p \overset{\!\!\text{CE}\!\!}{\epsilon_p}
\end{align*}
With logits defined as:
\begin{align*}
    \overset{\!\!\text{CE}\!\!}{l}(a) &= \mu \sum_{a} W(a) - \smashoperator{\sum_{p, a'_p, a''_p}} \overset{\!\!\text{CE}\!\!}{\alpha_p}(a'_p, a''_p) \overset{\!\!\text{CE}\!\!}{A_p}(a'_p,a''_p, a)    
\end{align*}
And primal variables defined:
\begin{align*}
    \overset{\!\!\text{CE}\!\!}{\sigma(a)} &= \frac{\hat{\sigma}(a) \exp \Big( \overset{\text{CE}}{l(a)} \Big)}{\sum\limits_{\!\! a \in \mathcal{A} \!\!} \hat{\sigma}(a) \exp \Big( \overset{\text{CE}}{l(a)} \Big)} &
    \overset{\!\!\text{CE}\!\!}{\epsilon_p} &= (\hat{\epsilon}_p - \epsilon^+_p) \exp \left( - \frac{1}{\rho} \smashoperator{\sum_{a'_p, a''_p}} \overset{\!\!\text{CE}\!\!}{\alpha_p}(a'_p, a''_p) \right) + \epsilon^+_p
\end{align*}
\end{theorem}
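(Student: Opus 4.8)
The plan is to derive the stated dual by Lagrangian duality, exploiting the fact that the primal $\hat{\epsilon}$-MWMRE CE problem is a concave maximization. First I would check the convexity prerequisites: the welfare term $\mu\sum_a\sigma(a)W(a)$ is linear, the negative relative entropy $-\sum_a\sigma(a)\ln(\sigma(a)/\hat\sigma(a))$ is strictly concave in $\sigma$, and --- writing $u_p=\epsilon^+_p-\epsilon_p$ --- the engineered penalty equals $-\rho u_p\ln\!\big(u_p/(e(\epsilon^+_p-\hat\epsilon_p))\big)$, whose second derivative in $u_p$ is $-\rho/u_p<0$, so it is concave in $\epsilon_p$. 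The feasible set (the simplex intersected with the CE half-spaces of Equation~\eqref{eq:ce_con}, together with $\epsilon_p<\epsilon^+_p$ coming from the domain of the penalty) is convex, and a strictly feasible point exists because a CE always exists and $\epsilon^+_p$ is chosen at the payoff scale; hence Slater's condition holds and strong duality applies.

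Next I would introduce multipliers $\alpha^{\text{CE}}_p(a'_p,a''_p)\ge 0$ for the CE constraints; the normalization $\sum_a\sigma(a)=1$ I would handle directly (or via a multiplier $\lambda$ that is then eliminated), and the constraints $\sigma(a)\ge 0$ need no explicit multiplier since the $-\mathrm{KL}$ term forces the maximizer into the interior of the simplex. Collecting the terms of the Lagrangian that are linear in $\sigma(a)$ produces exactly the coefficient $l^{\text{CE}}(a)=\mu W(a)-\sum_{p,a'_p,a''_p}\alpha^{\text{CE}}_p(a'_p,a''_p)A^{\text{CE}}_p(a'_p,a''_p,a)$ from the statement. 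Maximizing $\sum_a\sigma(a)l^{\text{CE}}(a)-\sum_a\sigma(a)\ln(\sigma(a)/\hat\sigma(a))$ over the simplex is the Gibbs (Donsker--Varadhan) variational principle: the maximizer is the tilted distribution $\sigma^{\text{CE}}(a)=\hat\sigma(a)e^{l^{\text{CE}}(a)}/\sum_{a'}\hat\sigma(a')e^{l^{\text{CE}}(a')}$, matching the claimed primal joint, and the optimal value is the log-partition term $\ln\sum_a\hat\sigma(a)e^{l^{\text{CE}}(a)}$.

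Then I would carry out the inner maximization over each $\epsilon_p$, which is where the engineered penalty is designed to pay off. With $u_p=\epsilon^+_p-\epsilon_p$, the $\epsilon_p$-dependent part of the Lagrangian is $-\rho u_p\ln\!\big(u_p/(e(\epsilon^+_p-\hat\epsilon_p))\big)-u_p\sum_{a'_p,a''_p}\alpha^{\text{CE}}_p(a'_p,a''_p)$ plus a term $\epsilon^+_p\sum_{a'_p,a''_p}\alpha^{\text{CE}}_p$ that is constant in $u_p$; setting the derivative to zero gives $u_p=(\epsilon^+_p-\hat\epsilon_p)\exp\!\big(-\tfrac1\rho\sum_{a'_p,a''_p}\alpha^{\text{CE}}_p(a'_p,a''_p)\big)$, i.e. precisely the stated $\epsilon^{\text{CE}}_p$, and concavity confirms it is the maximizer. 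Substituting $u_p^\star$ back collapses the per-player contribution to $\rho u_p^\star+\epsilon^+_p\sum_{a'_p,a''_p}\alpha^{\text{CE}}_p(a'_p,a''_p)$.

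Finally I would assemble the dual function as $\ln\sum_a\hat\sigma(a)e^{l^{\text{CE}}(a)}+\sum_p\epsilon^+_p\sum_{a'_p,a''_p}\alpha^{\text{CE}}_p(a'_p,a''_p)+\rho\sum_p(\epsilon^+_p-\epsilon^{\text{CE}}_p)$, observe that $\rho\sum_p\epsilon^+_p$ is independent of the duals, and conclude that minimizing this over $\alpha^{\text{CE}}_p\ge 0$ is equivalent to minimizing $L^{\text{CE}}$ as written; by strong duality its minimizer $\alpha^{\text{CE}\star}_p$ yields the primal optimizers through the formulas for $\sigma^{\text{CE}}$ and $\epsilon^{\text{CE}}_p$ derived above. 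The main obstacle is not a single hard step but careful bookkeeping in the $\epsilon_p$ optimization --- verifying that the engineered penalty's stationary point is exactly the claimed exponential and tracking the additive constant that is dropped --- together with discharging the concavity/Slater conditions that license strong duality; the $\sigma$-maximization is the familiar convex-conjugate-of-KL computation.
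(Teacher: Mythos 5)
Your proposal is correct and takes essentially the same route as the paper's proof: a Lagrangian for the constrained concave maximization, elimination of $\sigma(a)$ via the tilted (Gibbs) distribution and of the normalization multiplier to produce the log-partition term, then stationarity in $\epsilon_p$ yielding the stated exponential form, and back-substitution. Your extra care --- discharging concavity/Slater, justifying the omission of multipliers for $\sigma(a)\ge 0$ (the paper introduces $\beta(a)$ and then sets it to zero), and explicitly noting the dropped additive constant $\rho\sum_p \epsilon^+_p$ --- only makes explicit what the paper leaves implicit.
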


\begin{proof}
Construct a Lagrangian, $\max_{\sigma, \epsilon}\min_{\alpha, \beta, \lambda}L^{\sigma, \epsilon_p}_{\alpha_p, \beta, \lambda}$, where the primal variables are being maximized and the dual variables are being minimized.
\begin{align*}
    L^{\sigma, \epsilon_p}_{\alpha_p, \beta, \lambda} &= -\sum_{a} \sigma(a) \ln\left(\frac{\sigma(a)}{\hat{\sigma}(a)}\right) + \mu \sum_{a \in \mathcal{A}} W(a) \sigma(a) - \rho \sum_p \left(\epsilon^+_p - \epsilon_p \right) \ln \left( \frac{1}{\exp(1)} \frac{\epsilon^+_p- \epsilon_p}{(\epsilon^+_p- \hat{\epsilon}_p )} \right ) \\
    &\phantom{=} + \sum_{a} \beta(a) \sigma(a) - \lambda \left(\sum_a \sigma(a) - 1 \right) - \smashoperator{\sum_{p, a'_p, a''_p}} \alpha_p(a'_p,a_p) \left( \sum_{a} \sigma(a) A_p(a'_p,a''_p,a) - \epsilon_p  \right) \\
    &= \sum_{a} \sigma(a) \Biggl( - \ln\left(\frac{\sigma(a)}{\hat{\sigma}(a)}\right) +\mu W(a) +\beta(a) - \lambda - \smashoperator{\sum_{p, a'_p, a''_p}} \alpha_p(a'_p,a''_p)  A_p(a'_p,a''_p,a) \Biggr) \\
    &\phantom{=} + \smashoperator{\sum_{p, a'_p, a''_p}} \alpha_p(a'_p,a''_p) \epsilon_p + \lambda - \rho \sum_p \left(\epsilon^+_p - \epsilon_p \right) \ln \left( \frac{1}{\exp(1)} \frac{\epsilon^+_p- \epsilon_p}{(\epsilon^+_p- \hat{\epsilon}_p )} \right )
\end{align*}

Taking the derivatives with respect to the joint distribution $\sigma(a)$, and setting to zero.
\begin{align*}
    \frac{\partial L^{\sigma, \epsilon_p}_{\alpha_p, \beta, \lambda}}{\partial \sigma(a)} &= - \ln\left(\frac{\sigma(a)}{\hat{\sigma}(a)}\right) - 1  + \mu W(a) + \beta(a) - \lambda - \smashoperator{\sum_{p, a'_p, a_p}} \alpha_p(a'_p,a_p) A_p(a'_p,a_p,a) = 0 \\
    \sigma^*(a) &= \hat{\sigma}(a) \exp \Biggl( - \lambda - 1 + \mu W(a) + \beta(a) - \smashoperator{\sum_{p, a'_p, a_p}} \alpha_p(a'_p,a_p) A_p(a'_p,a_p,a) \Biggr)
\end{align*}

Substituting back in:
\begin{align*}
    L^{\epsilon_p}_{\alpha_p, \beta, \lambda} &= \sum_a \sigma^*(a) + \smashoperator{\sum_{p, a'_p, a_p}} \alpha_p(a'_p,a_p) \epsilon_p + \lambda - \rho \sum_p \left(\epsilon^+_p - \epsilon_p \right) \ln \left( \frac{1}{\exp(1)} \frac{\epsilon^+_p- \epsilon_p}{(\epsilon^+_p- \hat{\epsilon}_p )} \right )
\end{align*}

Taking the derivative with respect to $\lambda$ and setting to zero.
\begin{align*}
    \frac{\partial L^{\epsilon_p}_{\alpha_p, \beta, \lambda}}{\partial \lambda} &= -\sum_a \sigma^*(a) + 1 = 0 \\
    \exp \left( \lambda^* + 1 \right) &= \sum_a \hat{\sigma}(a) \exp \left(\mu W(a) + \beta(a) - \smashoperator{\sum_{p, a'_p, a''_p}} \alpha_p(a'_p,a''_p) A_p(a'_p,a''_p,a) \right)
\end{align*}

Substituting back in:
\begin{align*}
    L^{\epsilon_p}_{\alpha_p, \beta} &= \ln \left( \sum_a \hat{\sigma}(a) \exp \left(\mu W(a) + \beta(a) - \smashoperator{\sum_{p, a'_p, a''_p}} \alpha_p(a'_p,a''_p) A_p(a'_p,a''_p,a) \right) \right) \\
    &\phantom{=} + \smashoperator{\sum_{p, a'_p, a''_p}} \alpha_p(a'_p,a''_p) \epsilon_p - \rho \sum_p \left(\epsilon^+_p - \epsilon_p \right) \ln \left( \frac{1}{\exp(1)} \frac{\epsilon^+_p- \epsilon_p}{(\epsilon^+_p- \hat{\epsilon}_p )} \right )
\end{align*}

Noting that the term is minimized when $\beta(a)=0$, and that we can lift the $\hat{\sigma}(a)$ term into the exponential, we have:
\begin{align*}
    L^{\epsilon_p}_{\alpha_p} &= \ln \left( \sum_a  \hat{\sigma}(a) \exp \left(\mu W(a) - \smashoperator{\sum_{p, a'_p, a''_p}} \alpha_p(a'_p,a''_p) A_p(a'_p,a''_p,a) \right) \right) \\
    &\phantom{=} + \smashoperator{\sum_{p, a'_p, a''_p}} \alpha_p(a'_p,a''_p) \epsilon_p  - \rho \sum_p \left(\epsilon^+_p - \epsilon_p \right) \ln \left( \frac{1}{\exp(1)} \frac{\epsilon^+_p- \epsilon_p}{(\epsilon^+_p- \hat{\epsilon}_p )} \right )
\end{align*}

Taking the derivatives with respect to the approximation parameter $\epsilon_p$, and setting to zero.
\begin{align*}
    \frac{\partial L^{\epsilon_p}_{\alpha_p}}{\partial \epsilon_p} &= \rho \ln \left( \frac{\epsilon^+_p - \epsilon^*_p}{\epsilon^+_p - \hat{\epsilon}_p} \right )  + \smashoperator{\sum_{a'_p, a_p}} \alpha_p(a'_p,a_p) = 0 \implies
    \epsilon_p^* &= (\hat{\epsilon}_p - \epsilon^+_p) \exp \left( - \frac{1}{\rho} \smashoperator{\sum_{a'_p, a_p}} \alpha_p(a'_p,a_p) \right) + \epsilon^+_p
\end{align*}

Therefore:
\begin{align*}
    &\quad \ln \left( \frac{1}{\exp(1)} \frac{\epsilon^+_p- \epsilon_p^*}{ (\epsilon^+_p- \hat{\epsilon}_p )} \right ) = \ln \left( \frac{1}{\exp(1)} \exp\left( - \frac{1}{\rho} \smashoperator{\sum_{a'_p, a_p}} \alpha_p(a'_p,a''_p) \right) \right ) = - \frac{1}{\rho} \smashoperator{\sum_{a'_p, a''_p}} \alpha_p(a'_p,a''_p) - 1
\end{align*}

Substituting back in:
\begin{align*}
    L_{\alpha_p} &= \ln \left( \sum_a  \hat{\sigma}(a) \exp \left(\mu W(a) - \smashoperator{\sum_{p, a'_p, a''_p}} \alpha_p(a'_p,a''_p) A_p(a'_p,a''_p,a) \right) \right) \\
    &\phantom{=} + \smashoperator{\sum_{p, a'_p, a''_p}} \epsilon^+_p \alpha_p(a'_p,a''_p) - \rho \sum_p (\hat{\epsilon}_p - \epsilon^+_p) \exp \left( - \frac{1}{\rho} \smashoperator{\sum_{a'_p, a''_p}} \alpha_p(a'_p,a''_p) \right)
\end{align*}

\end{proof}

\subsection{CCEs}

\begin{theorem}[CCE]
The $\hat{\epsilon}$-MWMRE CCE solution is equivalent to minimizing the loss:
\begin{align*}
    \overset{\!\!\text{CCE}\!\!}{L} &= \ln \left( \sum_{a \in \mathcal{A}} \hat{\sigma}(a) \exp \left( \overset{\!\!\text{CCE}\!\!}{l(a)} \right) \right) + \sum_p \epsilon^+_p  \smashoperator{\sum_{a'_p}} \overset{\!\!\text{CCE}\!\!}{\alpha_p}(a'_p)  - \rho \sum_p \overset{\!\!\text{CCE}\!\!}{\epsilon_p}    
\end{align*}
With logits defined as:
\begin{align*}
    \overset{\!\!\text{CCE}\!\!}{l}(a) &= \mu \sum_{a} W(a) - \smashoperator{\sum_{p, a'_p}} \overset{\!\!\text{CCE}\!\!}{\alpha_p}(a'_p) \overset{\!\!\text{CCE}\!\!}{A_p}(a'_p,a)    
\end{align*}
And primal variables defined:
\begin{align*}
    \overset{\!\!\text{CCE}\!\!}{\sigma(a)} &= \frac{\hat{\sigma}(a) \exp \Big( \overset{\text{CCE}}{l(a)} \Big)}{\sum\limits_{\!\! a \in \mathcal{A} \!\!} \hat{\sigma}(a) \exp \Big( \overset{\text{CCE}}{l(a)} \Big)} &
    \overset{\!\!\text{CCE}\!\!}{\epsilon_p} &= (\hat{\epsilon}_p - \epsilon^+_p) \exp \left( - \frac{1}{\rho} \smashoperator{\sum_{a'_p}} \overset{\!\!\text{CCE}\!\!}{\alpha_p}(a'_p) \right) + \epsilon^+_p
\end{align*}
\end{theorem}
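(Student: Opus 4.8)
The plan is to mirror the proof of Theorem~\ref{the:mwmrece} essentially line for line, with the CE constraint family $\{a''_p \neq a'_p\}$ and deviation gains $A^{\text{CE}}_p(a'_p,a''_p,a)$ replaced throughout by the single-index CCE constraints of Equation~\eqref{eq:cce_con} and the deviation gains $A^{\text{CCE}}_p(a'_p,a)$. The underlying optimization is a concave maximization (relative entropy is strictly convex, the engineered approximation penalty is strictly convex on the admissible range $\epsilon_p < \epsilon^+_p$, and all constraints are linear), and $\hat\sigma$ having full support supplies a strictly feasible point, so Slater's condition gives strong duality; hence it suffices to locate the saddle point of the Lagrangian.

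First I would form $\max_{\sigma,\epsilon}\min_{\alpha,\beta,\lambda} L$ with $\alpha^{\text{CCE}}_p(a'_p)\ge 0$ multiplying the CCE inequalities, $\beta(a)\ge 0$ multiplying $\sigma(a)\ge 0$, and $\lambda$ for $\sum_a\sigma(a)=1$, then collect the coefficient of each $\sigma(a)$. Setting $\partial L/\partial\sigma(a)=0$ gives $\sigma^*(a)=\hat\sigma(a)\exp\bigl(-\lambda-1+\mu W(a)+\beta(a)-\sum_{p,a'_p}\alpha^{\text{CCE}}_p(a'_p)A^{\text{CCE}}_p(a'_p,a)\bigr)$, which is already the claimed softmax form up to normalization. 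Substituting back and solving $\partial L/\partial\lambda=0$ yields $\exp(\lambda^*+1)=\sum_a\hat\sigma(a)\exp\bigl(\mu W(a)+\beta(a)-\sum_{p,a'_p}\alpha^{\text{CCE}}_p(a'_p)A^{\text{CCE}}_p(a'_p,a)\bigr)$, turning the free constant into the log-partition term $\ln\bigl(\sum_a\hat\sigma(a)\exp(l^{\text{CCE}}(a))\bigr)$. Since what remains is non-decreasing in each $\beta(a)$ (it enters only inside that log-sum-exp with positive sign, while complementary slackness $\beta(a)\sigma^*(a)=0$ together with $\sigma^*(a)>0$ forces it to vanish), the inner minimization sets $\beta(a)=0$, and lifting $\hat\sigma(a)$ into the exponent recovers the stated $l^{\text{CCE}}(a)$ and $\sigma^{\text{CCE}}(a)$.

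Next I would take $\partial L/\partial\epsilon_p=0$. The penalty was deliberately built as $(\epsilon^+_p-\epsilon_p)\ln\bigl(\tfrac{1}{e}\tfrac{\epsilon^+_p-\epsilon_p}{\epsilon^+_p-\hat\epsilon_p}\bigr)$ precisely so that its derivative is the clean expression $\rho\ln\bigl(\tfrac{\epsilon^+_p-\epsilon_p}{\epsilon^+_p-\hat\epsilon_p}\bigr)$; combining with the $\epsilon_p\sum_{a'_p}\alpha^{\text{CCE}}_p(a'_p)$ term and setting to zero gives $\epsilon^*_p=(\hat\epsilon_p-\epsilon^+_p)\exp\bigl(-\tfrac{1}{\rho}\sum_{a'_p}\alpha^{\text{CCE}}_p(a'_p)\bigr)+\epsilon^+_p$, which is exactly the claimed primal approximation. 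Plugging $\epsilon^*_p$ back in, the $\ln(\tfrac{1}{e}\cdots)$ factor evaluates to $-\tfrac{1}{\rho}\sum_{a'_p}\alpha^{\text{CCE}}_p(a'_p)-1$, so the penalty term collapses against the $\epsilon^+_p\sum_{a'_p}\alpha^{\text{CCE}}_p(a'_p)$ term exactly as in the CE case, leaving $L^{\text{CCE}}$ in the stated dual form.

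The steps that deserve genuine care rather than bookkeeping are (i) confirming strong duality actually holds so the saddle point equals the optimum --- this is where the hypotheses $\hat\sigma(a)>0$ and $\hat\epsilon_p<\epsilon^+_p$ do their work --- and (ii) justifying that the inner minimization over $\beta$ is attained at $\beta=0$ rather than on a boundary that would push some $\sigma^*(a)$ off the simplex; both are settled by the same KKT stationarity and complementary-slackness argument used in Theorem~\ref{the:mwmrece}. Everything else is the identical algebra with the double sum $\sum_{a'_p,a''_p}$ collapsed to $\sum_{a'_p}$ and $A^{\text{CE}}_p(a'_p,a''_p,a)$ replaced by $A^{\text{CCE}}_p(a'_p,a)$; one can also note, as a consistency check, that $A^{\text{CCE}}_p(a'_p,a)=\sum_{a''_p}A^{\text{CE}}_p(a'_p,a''_p,a)$ makes the CCE logits a coarsening of the CE logits.
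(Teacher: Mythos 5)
Your proposal is correct and is exactly the route the paper takes: its proof of the CCE theorem is literally ``Similar proof to Theorem~\ref{the:mwmrece},'' i.e.\ the CE derivation with the double-indexed constraints $\sum_{a'_p,a''_p}$ collapsed to $\sum_{a'_p}$ and $A^{\text{CE}}_p$ replaced by $A^{\text{CCE}}_p$, which is what you carry out. Your added remarks on Slater's condition and the complementary-slackness justification for $\beta(a)=0$ are more careful than the paper's treatment but do not change the argument.
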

\begin{proof}
Similar proof to Theorem~\ref{the:mwmrece}.
\end{proof}

\section{Unit Variance Scaling}

The inputs are preprocessed so that the network does not need to learn to be invariant to the offset or scale. This is achieved by using a zero-mean offset, and normalizing by an $m$-norm, scaled with a constant $Z_m$. This constant is chosen such that the variance of the elements is one.

\begin{subequations}
\begin{align}
    G_p^{L_m}(a) &= Z_m \frac{G_p(a) - \frac{1}{|\mathcal{A}|}\sum_a G_p(a)}{\left\Vert G_p(a) - \frac{1}{|\mathcal{A}|}\sum_a G_p(a)\right\Vert_m} \\
    \hat{\epsilon}_p^{L_m} &= \text{clip} \left( \frac{\hat{\epsilon}_p}{\left\Vert G_p(a) - \frac{1}{|\mathcal{A}|}\sum_a G_p(a)\right\Vert_m}, -\hat{\epsilon}^+ = - Z_m, +\hat{\epsilon}^+  = + Z_m \right) \\
    W^{L_m}(a) &= Z_m \frac{W(a) - \frac{1}{|\mathcal{A}|}\sum_a W(a)}{\left\Vert W(a) - \frac{1}{|\mathcal{A}|}\sum_a W(a)\right\Vert_m} \\
    \hat{\sigma}^{L_1}(a) &= Z_\sigma \left( \hat{\sigma}(a) - \frac{1}{|\mathcal{A}|} \right)
\end{align}
\end{subequations}

Some scale factors are:
\begin{align}
    Z_\sigma &= |\mathcal{A}|\sqrt{\frac{|\mathcal{A}| + 1}{|\mathcal{A}| - 1}} &
    Z_2 &= \sqrt{|\mathcal{A}|}
\end{align}

The constant, $Z_\sigma$, of the joint is derived from the variance of elements of a flat Dirichlet distribution. The constant, $Z_2$, of the the $L_2$ ring is derived from the mean of the Chi-Squared distribution.

\section{Equivariant Pooling Functions}
\label{sec:equivariant_pooling_functions}

Functions which maintain equivariance over player and strategy permutations are useful building blocks for neural network architectures. These comprise of two components; (i) an equivariant pooling function $\phi$, such as mean sum, min, or max, and (ii) the reduction dimensions.

An \emph{equivairant pooling function} has three properties: (i) it collapses one or more of the dimensions of a tensor, (ii) the operation is invariant to the order of the elements in the collapsed dimensions, and (iii) the operation is equivariant to the order of the elements in the non-collapsed dimensions. For example, the reduction $\sum_{a_2} G_1(a_1, a_2) = R_1(a_1)$ (i) reduces the dimensionality from $|\mathcal{A}_1| \times |\mathcal{A}_2|$ to $|\mathcal{A}_1|$, (ii) reordering the columns of $G_1(a_1, a_2)$ does not change the calculation, but (iii) reordering the rows of $G_1(a_1, a_2)$ results in an equivariant output, where $R_1(a_1)$ is reordered in the same way.

Such combinations of pooling functions and reduction dimensions can be combined to construct a network that is equivariant to strategy and player permutation. Consider several such pooling functions composed together (Figure~\ref{fig:equivariant_pooling_functions}).

\begin{figure*}[t!]
    \centering
    \begin{tikzpicture}
    \node[anchor=south west,inner sep=0] (image) at (0,0) {\includegraphics[width=1.0\textwidth]{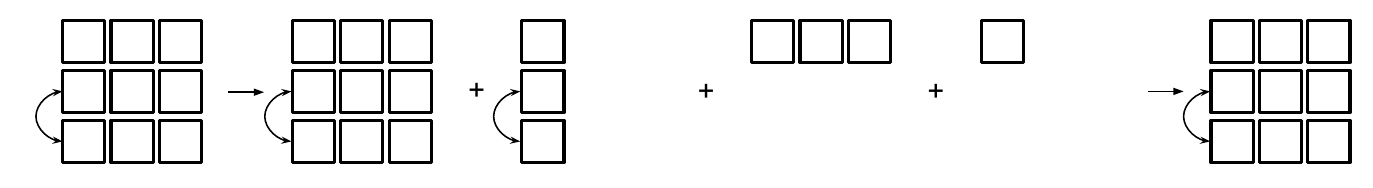}};
    \begin{scope}[x={(image.south east)},y={(image.north west)}]
        \node [text width=2cm,align=center] (a) at (0.095,1.25) {\scriptsize $I(a_1,a_2)$\\$[|\mathcal{A}_1|,|\mathcal{A}_2|]$};
        \node [text width=4cm,align=center] (b) at (0.264,1.25) {\scriptsize $I(a_1,a_2)$\\$[|\mathcal{A}_1|,|\mathcal{A}_2|]$};
        \node [text width=3cm,align=center] (c) at (0.433,1.25) {\scriptsize $\phi_{a_2}I(a_1,a_2)$\\$[|\mathcal{A}_1|, 1]$};
        \node [text width=6cm,align=center] (d) at (0.600,1.25) {\scriptsize $\phi_{a_1}I(a_1,a_2)$\\$[1,|\mathcal{A}_2|]$};
        \node [text width=6cm,align=center] (d) at (0.768,1.25) {\scriptsize $\phi_{a_1,a_2}I(a_1,a_2)$\\$[1,1]$};
        \node [text width=6cm,align=center] (d) at (0.937,1.25) {\scriptsize $O(a_1,a_2)$\\$[|\mathcal{A}_1|,|\mathcal{A}_2|]$};
    \end{scope}
    \end{tikzpicture}%
    \caption{Equivariant Pooling Functions, mapping an input $I(a_1,a_2)$ to an output $O(a_1,a_2)$. Swapping the second and third row (for example) of the input results in the same swap in the outputs.}
    \label{fig:equivariant_pooling_functions}
\end{figure*}

\subsection{Payoffs to Payoffs}

Some equivariant functions which map payoff structures to payoff structures are:\\
\begin{subequations}
\begin{minipage}[t]{.32\linewidth}
  \begin{align}
    \phantom{\underset{\mkern-36mu a'_p \mkern-36mu}{\phi}} ~ &g(p,a_1,...,a_N) \label{eq:linear} \\
    \underset{\mkern-36mu a_1,...,a_N \mkern-36mu}{\phi} ~ &g(p,a_1,...,a_N) \label{eq:sum_linear} \\
    \underset{\mkern-36mu p,a_1,...,a_N \mkern-36mu}{\phi} ~ &g(p,a_1,...,a_N) \label{eq:sum_joint} \\
    \underset{\mkern-36mu p \mkern-36mu}{\phi} ~ &g(p,a_1,...,a_N) \label{eq:sum_player}
  \end{align}
\end{minipage}\hfill
\begin{minipage}[t]{.32\linewidth}
  \begin{align}
    \underset{\mkern-36mu p,a_q \mkern-36mu}{\phi} ~ &g(p,a_1,...,a_N) \label{eq:sum_player_and_player_strategies} \\
    \underset{\mkern-36mu p,a_{-q} \mkern-36mu}{\phi} ~ &g(p,a_1,...,a_N) \label{eq:sum_player_and_other_player_strategies} \\
    \underset{\mkern-36mu a_q \mkern-36mu}{\phi} ~ &g(p,a_1,...,a_N) \label{eq:sum_player_strategies} \\
    \underset{\mkern-36mu a_{-q} \mkern-36mu}{\phi} ~ &g(p,a_1,...,a_N) \label{eq:sum_other_player_strategies} \\
    &\forall ~ q \in [1,N] \nonumber
  \end{align}
\end{minipage} \hfill
\begin{minipage}[t]{.32\linewidth}
  \begin{align}
    \underset{\mkern-36mu a_p \mkern-36mu}{\phi} ~ &g(q,a_1,...,a_N) \label{eq:sum_own_player_strategies} \\
    \underset{\mkern-36mu a_{-p} \mkern-36mu}{\phi} ~ &g(q,a_1,...,a_N) \label{eq:sum_own_other_player_strategies} \\
    \underset{\mkern-36mu a_q \mkern-36mu}{\phi} ~ &g(q,a_1,...,a_N) \label{eq:sum_own_player_strategies2} \\
    \underset{\mkern-36mu a_{-q} \mkern-36mu}{\phi} ~ &g(q,a_1,...,a_N) \label{eq:sum_own_other_player_strategies2} \\
    &\forall ~ q \in [1,N] \nonumber
  \end{align}
\end{minipage}
\end{subequations}

If all players have an equal number of strategies, for some primitives (Equations~\eqref{eq:sum_player_and_player_strategies}-\eqref{eq:sum_own_other_player_strategies2}), weights can be shared over all $q \in [1,N]$ because of symmetry.

\subsection{CCE Duals to CCE Duals}
\label{sec:equivariant_pooling_functions_cce}

All equivariant CCE dual pooling functions are given below. Equations~\eqref{eq:cce_pool_a} and ~\eqref{eq:cce_pool_b} may only be used for cube shaped games.  \\
\begin{subequations}
\begin{minipage}[t]{.24\linewidth}
    \begin{align}
        \phantom{\underset{\mkern-36mu }{\phi}} ~ &\alpha_p(a'_p)
    \end{align}
\end{minipage}\hfill
\begin{minipage}[t]{.24\linewidth}
    \begin{align}
        \underset{\mkern-36mu a'_p \mkern-36mu }{\phi} ~ &\alpha_p(a'_p)
    \end{align}
\end{minipage}\hfill
\begin{minipage}[t]{.24\linewidth}
    \begin{align} \label{eq:cce_pool_a}
        \underset{\mkern-36mu p \mkern-36mu }{\phi} ~ &\alpha_p(a'_p)
    \end{align}
\end{minipage}\hfill
\begin{minipage}[t]{.24\linewidth}
    \begin{align} \label{eq:cce_pool_b}
        \underset{\mkern-36mu p, a'_p \mkern-36mu }{\phi} ~ &\alpha_p(a'_p)
    \end{align}
\end{minipage}
\end{subequations}

\subsection{CE Duals to CE Duals}

All zero-diagonal equivariant CE dual pooling functions are:\\
\begin{subequations}
\begin{minipage}[t]{.24\linewidth}
    \begin{align}
        \phantom{\underset{\mkern-36mu a'_p \mkern-36mu}{\phi}} &\alpha_p(a'_p, a''_p) \\
        \phantom{\underset{\mkern-36mu a'_p \mkern-36mu}{\phi}} &\alpha_p(a''_p, a'_p)
    \end{align}
\end{minipage} \hfill
\begin{minipage}[t]{.24\linewidth}
    \begin{align}
        \underset{\mkern-36mu a'_p \mkern-36mu}{\phi} ~ &\alpha_p(a'_p, a''_p) \\
        \underset{\mkern-36mu a'_p \mkern-36mu}{\phi} ~ &\alpha_p(a''_p, a'_p)
    \end{align}
\end{minipage} \hfill
\begin{minipage}[t]{.24\linewidth}
    \begin{align}
        \underset{\mkern-36mu a''_p \mkern-36mu}{\phi} ~ &\alpha_p(a'_p, a''_p) \\
        \underset{\mkern-36mu a''_p \mkern-36mu}{\phi} ~ &\alpha_p(a''_p, a'_p)
    \end{align}
\end{minipage} \hfill
\begin{minipage}[t]{.24\linewidth}
    \begin{align}
        \underset{\mkern-36mu a'_p, a''_p \mkern-36mu}{\phi} ~ &\alpha_p(a'_p, a''_p) \\
        \underset{\mkern-36mu p, a'_p, a''_p \mkern-36mu}{\phi} ~ &\alpha_p(a'_p, a''_p)
    \end{align}
\end{minipage}
\end{subequations}

\section{Experiment Architecture and Hyper-Parameters}

The architecture and hyper-parameters were chosen from a coarse sweep. The performance of architecture was not very sensitive to parameterization: similar settings will work well, or even better. Nevertheless we provide the details of the exact architecture used in the experiments.

\subsection{Architecture}

All experiments use the same network architecture, with either CCE or CE dual parameterization, implemented in JAX \citep{jax2018_jax} and Haiku \citep{hennigan2020_haiku_github}. We used pooling functions (Equations~\eqref{eq:linear}-\eqref{eq:sum_player} and \eqref{eq:sum_own_player_strategies2}-\eqref{eq:sum_own_other_player_strategies2}) for the payoffs to payoffs layers, and used all the pooling functions for dual layers. For $\phi$ we used both mean and max pooling together. The we used 5 payoffs to payoffs layers, each with 32 channels, a payoffs to duals layer with 64 channels and 2 duals to duals layers with 32 channels, which we denote $[(32, 32, 32, 32, 32), 64, (32, 32)]$. The network has 79,905 parameters. All nonlinearities are ReLUs apart from the final layer where we use a Softplus. Between every layer we use BatchNorm \citep{ioffe2015_batchnorm} with learned scale and variance correction. The network was initialized such that the variance of activations at every layer is unity. This was done empirically by passing a dummy batch of data through the network and calculating the variance.

\subsection{Hyper-Parameters}

We used a training batch size of 4096, the Optax \citep{hessel2020_optax} implementation of Adam \citep{kingma2014_adam} (learning rate $4 \times 10^{-4}$) optimizer with adaptive gradient clipping \citep{brock2021_adaptive_grad_clipping} (clipping $10^{-3}$). We used a learning rate schedule with (iteration, factor) pairs of $[(1\times 10^5, 1.0), (1\times 10^6, 0.6), (4\times 10^6, 0.3), (7\times 10^6, 0.1), (1\times 10^7, 0.06), (1\times 10^8, 0.03)]$. We included a weight decay loss (learning rate $1 \times 10^{-7}$).

\subsection{Network Parameterizations}

Different possible network parameterizations can be found in Table~\ref{tab:parameterizations}.

\begin{table}[t]
\centering
\caption{Possible Neural Equilibrium Solver solution parameterizations.}
\label{tab:parameterizations}
    \footnotesize
    \addtolength{\tabcolsep}{-3pt} 
    \begin{tabular}{r|ccccc|cc}
    & $G_p(a)$ & $\hat{\sigma}(a)$ & $\hat{\epsilon}_p$ & $\hat{\epsilon}^+$ & $W(a)$ & $\rho$ & $\mu$ \\\hline
    ME
    & $\sim L_m$ & $\frac{1}{|\mathcal{A}|}$ & $0$ & $Z_m$ & $0$ & $\gg 1$ & $0$ \\
    MT
    & $\sim L_m$ & $\hat{\sigma}(a)$ & $0$ & $Z_m$ & $0$ & $\gg 1$ & $0$ \\ \hline
    MU
    & $\sim L_m$ & $\frac{1}{|\mathcal{A}|}$ & $0$ & $Z_m$ & $\sum_p G_p(a)$ & $\gg 1$ & $\gg 1$ \\
    MWME
    & $\sim L_m$ & $\frac{1}{|\mathcal{A}|}$ & $0$ & $Z_m$ & $\sim L_m$ & $\gg 1$ & $\gg 1$ \\ \hline
    MRE
    & $\sim L_m$ & $\sim \text{Dir}(1)$ & $0$ & $Z_m$ & $0$ & $\gg 1$ & $0$ \\ \hline
    MS
    & $\sim L_m$ & $\frac{1}{|\mathcal{A}|}$ & $-Z_m$ & $Z_m$ & $0$ & $\gg 1$ & $0$ \\ \hline
    $\hat{\epsilon}_p$-ME
    & $\sim L_m$ & $\frac{1}{|\mathcal{A}|}$ & $\sim \text{U}(-Z_m, Z_m)$ & $Z_m$ & $0$ & $\gg 1$ & $0$ \\
    $\hat{\epsilon}_p$-MWME
    & $\sim L_m$ & $\frac{1}{|\mathcal{A}|}$ & $\sim \text{U}(-Z_m, Z_m)$ & $Z_m$ & $\sim L_m$  & $\gg 1$ & $\gg 1$ \\
    $\hat{\epsilon}_p$-MRE
    & $\sim L_m$ & $\sim \text{Dir}(1)$ & $\sim \text{U}(-Z_m, Z_m)$ & $Z_m$ & $0$ & $\gg 1$ & $0$ \\
    \end{tabular}
\end{table}

\subsection{Game Distributions}
\label{sec:game_dists}

A list of different game distributions can be found in Table~\ref{tab:game_classes}. The geometric interpretation of the invariant subspace of games is shown in Figure ~\ref{fig:payoffs_sampling}.

\begin{table}[t]
    \centering
    \caption{Classes of random games. We consider the three scale and offset invariant classes introduced in this paper (Section~\ref{sec:preprocessing}), and on a subset \citep{porter2008_ne_and_gamut_subset} of GAMUT \citep{nudelman2004_gamut} games using the functions in parenthesis and parameterized with the \texttt{-random\_params} flag.}
    \label{tab:game_classes}
    \begin{tabular}{r|l}
        Name & Game Description \\
        \hline
        L$_1$ & $L_1$ Invariant \\
        L$_2$ & $L_2$ Invariant \\
        L$_\infty$ & $L_{\infty}$ Invariant \\
        \hline
        D1 & Bertrand Oligopoly (\texttt{BertrandOligopoly}) \\
        D2 & Bidirectional LEG, Complete Graph (\texttt{BidirectionalLEG-CG}) \\
        D3 & Bidirectional LEG, Random Graph (\texttt{BidirectionalLEG-RG}) \\
        D4 & Bidirectional LEG, Star Graph (\texttt{BidirectionalLEG-SG}) \\
        D5 & Covariance Game, $\rho = 0.9$ (\texttt{CovariantGame-Pos}) \\
        D6 & Covariance Game, $\rho \in [-1/(N - 1), 1]$ (\texttt{CovariantGame}) \\
        D7 & Covariance Game, $\rho = 0$ (\texttt{CovariantGame-Zero}) \\
        D8 & Dispersion Game (\texttt{DispersionGame}) \\
        D9 & Graphical Game, Random Graph (\texttt{GraphicalGame-RG}) \\
        D10 & Graphical Game, Road Graph (\texttt{GraphicalGame-Road}) \\
        D11 & Graphical Game, Star Graph (\texttt{GraphicalGame-SG}) \\
        D12 & Graphical Game, Small-World (\texttt{GraphicalGame-SW}) \\
        D13 & Minimum Effort Game (\texttt{MinimumEffortGame}) \\
        D14 & Polymatrix Game, Complete Graph (\texttt{PolymatrixGame-CG}) \\
        D15 & Polymatrix Game, Random Graph (\texttt{PolymatrixGame-RG}) \\
        D16 & Polymatrix Game, Road Graph (\texttt{PolymatrixGame-Road}) \\
        D17 & Polymatrix Game, Small-World (\texttt{PolymatrixGame-SW}) \\
        D18 & Uniformly Random Game (\texttt{RandomGame}) \\
        D19 & Travelers Dilemma (\texttt{TravelersDilemma}) \\
        D20 & Uniform LEG, Complete Graph (\texttt{UniformLEG-CG}) \\
        D21 & Uniform LEG, Random Graph (\texttt{UniformLEG-RG}) \\
        D22 & Uniform LEG, Star Graph (\texttt{UniformLEG-SG}) 
    \end{tabular}
\end{table}

\begin{figure}[t]
    \centering
    \includegraphics[width=0.8\linewidth]{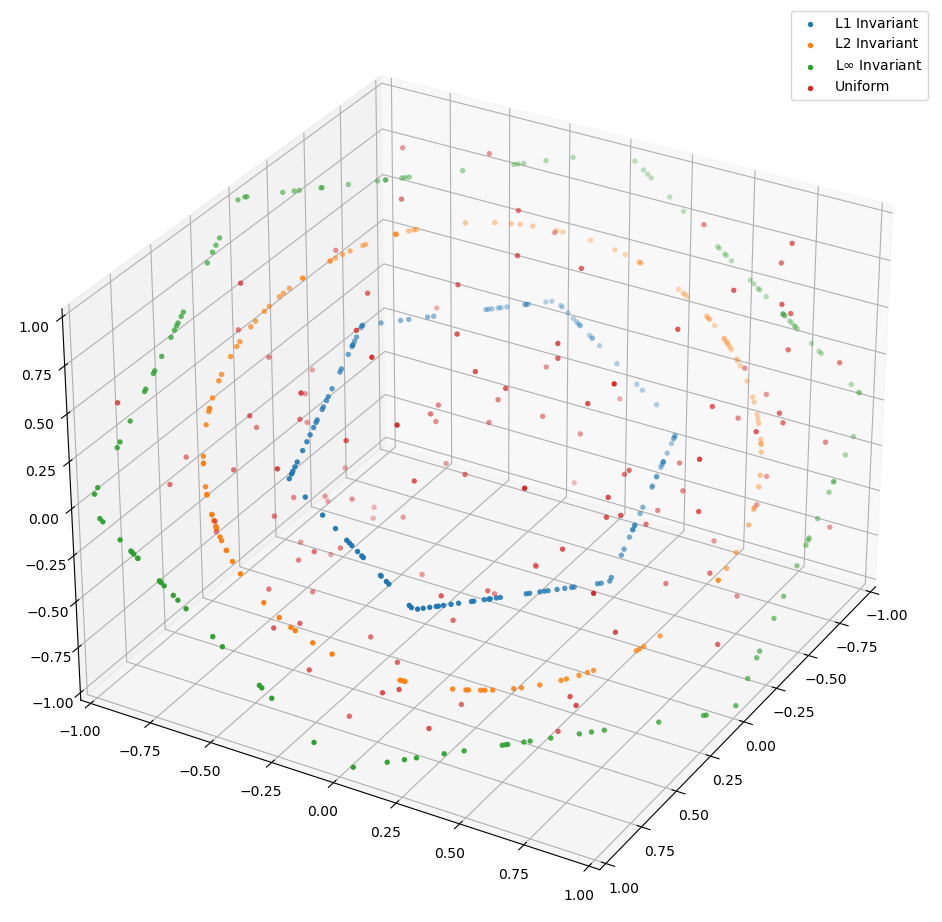}
    \caption{Shows 128 samples of payoffs for player 1 in $3 \times 1$ shaped games, under four different distributions. A $3 \times 1$ game is not theoretically interesting, but is used here because visualizing above 3 dimensions is difficult. Normalizing by an offset to result in zero-mean payoffs, and by a positive scale to result in unit norm payoffs, is geometrically the surface of an $(|\mathcal{A}|-1)$-ball. It is straightforward to uniformly sample over such a space. Furthermore, no interesting game structure is lost by only considering this subspace, because offset and positive scale transformations are invariant transformations. It is easy to map any payoff to this invariant subspace, so the neural network can handle any game of appropriate shape at test time. Meanwhile, a naive sample method, such as uniformly sampling payoffs is an unwieldy input for a neural network to decode.}
    \label{fig:payoffs_sampling}
\end{figure}

\subsection{Hardware}

We trained our network on a $32$ core TPU v3 \citep{norman2020_tpu_v2_tpu_v3}, and evaluated on an $8$ core TPU v2 \citep{norman2020_tpu_v2_tpu_v3}. For intuition, the $8\times8$ network trains at around $400$ batches per second (1,638,400 examples per second). Evaluation is even faster. Bigger games take longer, and scales approximately linearly with the number of joint actions in the game.

\section{Relative Entropy and Welfare Objectives}

Any solution can be realised by some relative entropy objective, since if a joint distribution $\sigma(a)$ is a (C)CE, then the solution with a relative entropy objective to $\hat{\sigma}(a) = \sigma(a)$ itself is optimised by $\sigma(a)$. One might imagine therefore that a relative entropy objective could be chosen to induce a maximum welfare solution, based on the payoffs. If possible, this would allow the MWMRE to be simplified. However, it is not straightforward to determine a priori which relative entropy objective(s) will lead to the maximum welfare. This means that relative entropy objectives are insufficient for finding Maximum Welfare solutions.

For example, consider a welfare $W(a) = \sum_p G_p(a)$. We might try to induce a welfare maximising solution by choosing a target joint $\hat{\sigma}(a) = \lim_{T \to \infty} \text{SoftMax}(T W(a))$, where $T$ is the temperature parameter. Finding a CE that minimizes relative entropy to $\hat{\sigma}$ would place high mass on the highest welfare joint action, but is not equivalent to maximizing the linear objective $\sum_a \sigma(a) W(a)$, for either CCEs or CEs.

\begin{table}[t]
    \centering
    \caption{Payoffs for two games that show that maximum welfare cannot be discovered via a MW objective to the distribution given by a softmax of welfare. Both games are symmetric, payoffs are given for the row player}
    \label{tab:mw_games}
    \begin{subtable}[t]{0.48\linewidth}
        \centering
        \caption{CE MW Counterexample}
        \label{tab:mw_ce_game}
        \begin{tabular}{r|cccc}
             & 1 & 2 & 3 & 4 \\
            \hline
            1 & -4 & 2, -2 & -999 & -999 \\
            2 & -2, 2 & 1 & -999 & -999 \\
            3 & -999 & -999 & -3 & 2, -2 \\
            4 & -999 & -999 & -2, 2 & 1.1
        \end{tabular}
    \end{subtable} \hfill
    \begin{subtable}[t]{0.48\linewidth}
        \centering
        \caption{CCE MW Counterexample}
        \label{tab:mw_cce_game}
        \begin{tabular}{r|cccc}
             & 1 & 2 & 3 & 4 \\
            \hline
            1 & 2 & 0 & 0 & 0, 2 \\
            2 & 0 & 3 & 0, 3 & -10, 7 \\
            3 & 0 & 3, 0 & -10 & -10, -6 \\
            4 & 2, 0 & 7, -10 & -6, -10 & 0
        \end{tabular}
    \end{subtable}
\end{table}


Consider game (a) in Table~\ref{tab:mw_games}. This game consists of two games of chicken side-by-side, which are mutually incompatible (i.e. the players must co-ordinate to play the same game of chicken to avoid a very large negative payoff for both). The softmax relative entropy objective will prefer the action pair (4, 4) over all others, as it gives slightly higher payoffs than (2, 2). Notice that a CE that recommends (4, 4) must also recommend the action pair (4, 3) some of the time in order to disincentivise the row player from deviating from action 4 to action 3. Similarly, a CE that recommends (2, 2) must also recommend (2, 1) some of the time to disincentivise the row player from deviating from action 2 to action 1.

Crucially, because (1, 1) has a lower payoff for the row player than (3, 3), in CEs consisting (2, 2) the mediator doesn't have to recommend (2, 1) as often as it has to recommend (4, 3) to form an effective disincentive. The result is that a CE that plays exclusively the joint actions $(1, 2)$, $(2, 1)$ and $(2, 2)$ can achieve higher welfare than any that plays $(4, 4)$, despite never playing the welfare maximising joint action.


Game (b) in Table~\ref{tab:mw_games} provides a counterexample for the CCE case. It works in a similar way to the CE counterexample: there are two high welfare joint strategies, (1, 1) and (2, 2). The latter has higher welfare, but if played too much a deviation to strategy 4 is incentivised. In the limit of $T$, the relative entropy objective selects whichever equilibrium has the highest probability of the max-welfare joint.

To disincentivse the row player's deviation to strategy 4, the column player must either: play only strategies 1 and 4, because the strategies 1 and 4 have the same payoff for the row player, or play strategy 3 sufficiently frequently that the benefit of deviating from strategy 2 to strategy 4 is nullified.

The first option gives rise to the max-welfare CCE, which plays (1, 1) with probability 1. The second gives rise to the CCE that plays (2, 2) with the highest possible probability: 0.2. It plays (2, 3) and (3, 2) with probability 0.4 each. This is chosen by the relative entropy objective, but gives each player an average payoff of 1.8, which is equal to the payoff for deviating to action 4 in this equilibrium, but lower than the payoff of (1, 1).

\end{document}